\newtheorem{example}{Example}
\newtheorem{theorem}{Theorem}
\newtheorem{definition}{Definition}
\newtheorem{proposition}{Proposition}
\newtheorem{corollary}{Corollary}
\newcommand{\cent}{\mathrel{\scalebox{1}[1.5]{$\shortmid$}\mkern-3.1mu\raisebox{0.1ex}{$=$}}}
\newcommand{\ent}{\mathrel{\scalebox{1}[1.5]{$\shortmid$}\mkern-3.1mu\raisebox{0.1ex}{$\equiv$}}}
\newcommand{\ms}[1]{[\![#1]\!]}
\newcommand{\pms}[1]{[\![\![#1]\!]\!]}
\newcommand{\imgeq}[1]{\vcenter{\hbox{\includegraphics[scale=0.25]{#1}}}}
\newcommand{\imgeqL}[1]{\vcenter{\hbox{\includegraphics[scale=0.35]{#1}}}}
\title{Inference of Abstraction for a Unified Account of Reasoning and Learning}
\author[ ]{Hiroyuki Kido}
\affil[ ]{%
    School of Computer Science and Informatics\\
    Cardiff University\\
    Park Place, Cardiff, CF10 3AT, UK
}
\begin{document}
\maketitle

\begin{abstract}
Inspired by Bayesian approaches to brain function in neuroscience, we give a simple theory of probabilistic inference for a unified account of reasoning and learning. We simply model how data cause symbolic knowledge in terms of its satisfiability in formal logic. The underlying idea is that reasoning is a process of deriving symbolic knowledge from data via abstraction, i.e., selective ignorance. The logical consequence relation is discussed for its proof-based theoretical correctness. The MNIST dataset is discussed for its experiment-based empirical correctness.
\end{abstract}
\section{Introduction}\label{sec:intro}
Bayes' theorem plays an important role today in AI, neuroscience and cognitive science. It underlies most modern approaches to uncertain reasoning in AI systems \citep{russell:09}. Neuroscience often uses it as a metaphor for functions of the cerebral cortex, the outer portion of the brain in charge of higher-order cognitive functions such as perception, memory, emotion and thought \citep{lee:03,knill:04,george:05,colombo:12,funamizu:16}. It relates various brain theories such as Bayesian coding hypothesis \citep{knill:04}, free-energy principle \citep{friston:12} and predictive coding \citep{rao:99}. Their common idea is that the biological brain can be seen as a probabilistic generative model by which the past experience of the brain is constantly, but unconsciously, used to predict what is likely to happen outside the brain \citep{friston:12,sanborn:16,Hohwy:14}.
\par
The success of Bayesian approaches in AI and neuroscience leads to the expectation that there is a common Bayesian account of reasoning and learning, especially entailment (or deduction) and prediction, the main concern of formal logic and machine learning, respectively. The idea is worth investigating as it may give a clue to think upon how reasoning and learning operate in the human brain. Additionally, finding a principle underlying reasoning and learning is an open problem in AI across the different disciplines, e.g., neuro-symbolic AI \citep{Andreas:16}. Despite the scientific importance, few research in AI has focused on a Bayesian approach to a computational model of reasoning and learning. Indeed, most present research, e.g., \citep{nilsson:86,pearl:88,pearl:91,friedman:99,richardson:06,sato:95,matthias:13,sanfilippo:18,botha:19}, combines different methods for reasoning and learning for a practical purpose to deal with reasoning from uncertain source of information. For example, maximum likelihood estimation is the method most often used to learn the probability or weight of each symbolic knowledge. Logical semantics is then used to draw conclusions from the probabilistic or weighted symbolic knowledge. Various types of logical semantics exist such as the semantics of Bayesian networks, Markov logic networks and distribution semantics. However, the method used for learning cannot be used for reasoning, and vice versa. Moreover, in computational cognitive science, the theory-based Bayesian models of induction \citep{Tenenbaum:06}, the learned inference model \citep{Dasgupta:20} and the Bayesian program learning framework \citep{Lake:15,Lake:17} rest on the idea that observable data and their variants are generated from more abstract hypotheses such as background knowledge and principles about the world. Although the idea is prevalent in the machine learning community, the idea eventually struggles with intractable computation associated with an exponentially growing hypothesis space, especially when trying to incorporate symbolic knowledge. 
\par
In this paper, we report that some important aspects of reasoning and learning can be unified via inference of abstraction as selective ignorance. The simple idea underlying the inference of abstraction is that intrinsically abstract symbolic knowledge should be derived from intrinsically concrete data as a result of inference. The idea is simply formalised as a probabilistic model of the causality that data determine states of the world, and the states of the world determine the truth value of symbolic knowledge. The idea opposes \citep{Tenenbaum:06}, \citep{Dasgupta:20} and \citep{Lake:15,Lake:17}, since we think that abstract hypotheses and knowledge are generated from observable data via abstraction (as discussed in Figure \ref{fig:hierarchy} later).
\par
We discuss three important perspectives on reasoning and learning. First, knowledge is intrinsically abstract whereas data are intrinsically concrete. The inference of abstraction derives symbolic knowledge from data. The natural view and approach contrast rules of inference and the semantics of Bayesian networks deriving knowledge from another knowledge. Second, our approach looks at how symbolic knowledge can be derived from data. This contrasts the machine learning approach looking at how data can be derived from parameters characterising the data. Third, the inference of abstraction comprises an interpretation and inverse interpretation of formal logic. The inference can be seen as a realisation of top-down and bottom-up processing often used in neuroscience as a metaphor for the information processing of the brain.
\par
The contributions of this paper are summarised as follows. First, this paper results in a new reasoning method that significantly generalises the classical consequence relation for fully data-driven reasoning from consistent and/or possible sources of information. Second, this paper results in a new machine learning method that significantly generalises a sort of the k-nearest neighbour method. The method  empirically outperforms a k-nearest neighbour method in AUC on the MNIST dataset. Third, this paper bridges logic and machine learning in a way that the special cases of the both ends of the bridge are the classical consequence relation and a sort of the K-nearest neighbour method, long-established and long-distance methods in formal logic and machine learning.
\par
This paper is organised as follows. In Section 2, we define a generative reasoning model for inference of abstraction. Section 3 discusses its logical and machine learning correctnesses. We summarise out results in Section 4.
\section{Inference of Abstraction}
Let $\{d_{1},d_{2},...,d_{K}\}$ be a multiset of $K$ data. $D$ denotes a random variable of data whose values are all the elements of $\{d_{1},d_{2},...,d_{K}\}$. For all data $d_{k} (1\leq k\leq K)$, we define the probability of $d_{k}$, denoted by $p(D=d_{k})$, as follows. 
\begin{eqnarray*}
\textstyle{p(D=d_{k})=\frac{1}{K}}
\end{eqnarray*}
\par
$L$ represents a propositional language for simplicity. Let $\{m_{1},m_{2},...,m_{N}\}$ be the set of models of $L$. A model is an assignment of truth values to all the atomic formulas in $L$. Intuitively, each model represents a different state of the world. We assume that each data $d_{k}$ supports a single model. We thus use a function $m$, $\{d_{1},d_{2},...,d_{K}\}\to\{m_{1},m_{2},...,m_{N}\}$, to map each data to the model supported by the data. $M$ denotes a random variable of models whose realisations are all the elements of $\{m_{1},m_{2},...,m_{N}\}$. For all models $m_{n} (1\leq n\leq N)$, we define the probability of $m_{n}$ given $d_{k}$, denoted by $p(M=m_{n}|D=d_{k})$, as follows.
\begin{eqnarray*}
&&p(M=m_{n}|D=d_{k})=
\begin{cases}
1 & \text{if } m_{n}=m(d_{k})\\
0 & \text{otherwise}
\end{cases}
\end{eqnarray*}
\par
The truth value of a propositional formula and first-order closed formula in classical logic is uniquely determined in a state of the world specified by a model of a language. Let $\alpha$ be a formula in $L$. We assume that $\alpha$ is a random variable whose realisations are 0 and 1 meaning false and true respectively. We use symbol $\ms{\alpha}$ to refer to the models of $\alpha$. Namely, $\ms{\alpha=1}$ and $\ms{\alpha=0}$ represent the set of models in which $\alpha$ is true and false, respectively. Let $\mu\in[0,1]$ be a variable, not a random variable. For all formulas $\alpha\in L$, we define the probability of each truth value of $\alpha$ given $m_{n}$, denoted by $p(\alpha|M=m_{n})$, as follows.
\begin{eqnarray*}
&&p(\alpha=1|M=m_{n})=
\begin{cases}
\mu & \text{if } m_{n}\in\llbracket\alpha=1\rrbracket\\
1-\mu & \text{otherwise }
\end{cases}
\\
&&p(\alpha=0|M=m_{n})=
\begin{cases}
\mu & \text{if } m_{n}\in\llbracket\alpha=0\rrbracket\\
1-\mu & \text{otherwise }
\end{cases}
\end{eqnarray*}
The above expressions can be simply written as a Bernoulli distribution with parameter $\mu\in[0,1]$, i.e.,
\begin{eqnarray*}
p(\alpha|M=m_{n})=\mu^{\ms{\alpha}_{m_{n}}}(1-\mu)^{1-\ms{\alpha}_{m_{n}}}.
\end{eqnarray*}
Here, the variable $\mu\in[0,1]$ plays an important role to relate formal logic to machine learning. We will see that $\mu=1$ relates to the classical consequence relation and its generalisation. We also see that $\mu\to 1$ relates to an all-nearest neighbour method, a generalisation of a sort of the K-nearest neighbour method in machine learning. Additionally, $\mu<1$ relates to a smoothed or weighted version of the all-nearest neighbour method. They are all discussed in the next section.
\par
In classical logic, given a model, the truth value of a formula does not change the truth value of another formula. Thus, in probability theory, the truth value of a formula $\alpha_{1}$ is conditionally independent of the truth value of another formula $\alpha_{2}$ given a model $M$, i.e., $p(\alpha_{1}|\alpha_{2},M,D)=p(\alpha_{1}|M,D)$ or equivalently $p(\alpha_{1},\alpha_{2}|M,D)=p(\alpha_{1}|M,D)p(\alpha_{2}|M,D)$. We therefore have
\begin{align}
\textstyle{p(L|M,D)=\prod_{\alpha\in L}p(\alpha|M,D).}\label{eq:1}
\end{align}
Moreover, in classical logic, the truth value of a formula depends on models but not data. Thus, in probability theory, the truth value of a formula $\alpha$ is conditionally independent of data $D$ given a model $M$, i.e., $p(\alpha|M,D)=p(\alpha|M)$. We thus have
\begin{align}
\textstyle{\prod_{\alpha\in L}p(\alpha|M,D)=\prod_{\alpha\in L}p(\alpha|M).}\label{eq:2}
\end{align}
Therefore, the full joint distribution, $p(L, M, D)$, can be written as follows.
\begin{align}
p(L, M, D)&=p(L|M, D)p(M|D)p(D)\nonumber\\
&\textstyle{=\prod_{\alpha\in L}p(\alpha|M)p(M|D)p(D)}\label{eq:3}
\end{align}
Here, the product rule (or chain rule) of probability theory is applied in the first equation, and Equations (\ref{eq:1}) and (\ref{eq:2}) in the second equation. As will be seen later, the joint distribution $p(L, M, D)$ is a probabilistic model of symbolic reasoning from data. We call the joint distribution a generative reasoning model for short. We often represent $p(L, M, D)$ as $p(L, M, D;\mu)$ if our discussion is relevant to $\mu$. We use symbol `$;$' to make sure that $\mu$ is a variable, but not a random variable. In this paper, we assume a finite number of realisations of each random variable.
\par
The full joint distribution implies that we can no longer discuss only the probabilities of individual formulas, but they are derived from data. For example, the probability of $\alpha\in L$ is calculated as follows.
{\small
\begin{align}
p(\alpha)&=\sum_{m}\sum_{d}p(\alpha,m,d)=\sum_{m}p(\alpha|m)\sum_{d}p(m|d)p(d)\label{eq:4}
\end{align}
}
Here, the sum rule of probability theory is applied in the first equation, and Equation (\ref{eq:3}) in the second equation.
\begin{proposition}\label{negation}
Let $p(L,M,D;\mu)$ be a generative reasoning model. For all $\alpha\in L$, $p(\alpha=0)=p(\neg\alpha=1)$ holds.
\end{proposition}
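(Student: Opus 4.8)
The plan is to reduce the claim to the single classical-logic fact that $\alpha$ is false in exactly those models where $\neg\alpha$ is true. Concretely, for every model $m_{n}$ we have $m_{n}\in\llbracket\alpha=0\rrbracket$ if and only if $m_{n}\in\llbracket\neg\alpha=1\rrbracket$, so the two model sets coincide, $\llbracket\alpha=0\rrbracket=\llbracket\neg\alpha=1\rrbracket$. This is the only point at which the semantics of negation enters the argument, and it is what forces the two sides of the identity to agree.

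Using that set equality, I would first show that the conditional distributions match model by model. By the definition of $p(\alpha|M=m_{n})$, we have $p(\alpha=0|M=m_{n})=\mu$ exactly when $m_{n}\in\llbracket\alpha=0\rrbracket$ and $1-\mu$ otherwise, while $p(\neg\alpha=1|M=m_{n})=\mu$ exactly when $m_{n}\in\llbracket\neg\alpha=1\rrbracket$ and $1-\mu$ otherwise. Since the triggering sets are the same set, we obtain $p(\alpha=0|M=m_{n})=p(\neg\alpha=1|M=m_{n})$ for every $n$, for any value of the parameter $\mu$.

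The final step is to marginalise over models and data using Equation (\ref{eq:4}). Writing $p(\alpha=0)=\sum_{m}p(\alpha=0|m)\sum_{d}p(m|d)p(d)$ and likewise for $p(\neg\alpha=1)$, the inner factor $\sum_{d}p(m|d)p(d)$ is identical in both expressions, since it depends only on the data-to-model map $m$ and the uniform prior on data, not on the formula under consideration. Substituting the model-wise equality from the previous step then equates the two sums term by term, so summing over $m$ yields $p(\alpha=0)=p(\neg\alpha=1)$.

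I expect no real obstacle here; the result is essentially bookkeeping once the semantic identity $\llbracket\alpha=0\rrbracket=\llbracket\neg\alpha=1\rrbracket$ is isolated. The only point worth stating carefully is that both $\mu$ and the data-dependent weight $\sum_{d}p(m|d)p(d)$ drop out of the comparison, so the equality holds for every $\mu\in[0,1]$ rather than merely in a limiting regime such as $\mu\to 1$.
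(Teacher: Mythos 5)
Your proposal is correct and follows essentially the same route as the paper's proof: both hinge on the semantic identity $\llbracket\alpha=0\rrbracket=\llbracket\neg\alpha=1\rrbracket$, use it to equate the Bernoulli likelihoods $p(\alpha=0\mid m)$ and $p(\neg\alpha=1\mid m)$ model by model, and then marginalise over $m$ (the paper writes the weight as $p(m)$ where you unfold it as $\sum_{d}p(m\mid d)p(d)$, which is the same quantity). Your closing remark that the identity holds for every $\mu\in[0,1]$ matches the paper's final sentence.
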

\begin{proof}
For all models $m$, $\alpha$ is false in $m$ if and only if $\lnot\alpha$ is true in $m$. Thus, $\ms{\alpha=0}=\ms{\lnot\alpha=1}$ is the case. Therefore,
\begin{align*}
p(\alpha=0)&\textstyle{=\sum_{m}p(\alpha=0|m)p(m)}\\
&\textstyle{=\sum_{m}\mu^{\ms{\alpha=0}_{m}}(1-\mu)^{1-\ms{\alpha=0}_{m}}p(m)}\\
&\textstyle{=\sum_{m}\mu^{\ms{\lnot\alpha=1}_{m}}(1-\mu)^{1-\ms{\lnot\alpha=1}_{m}}p(m)}\\
&\textstyle{=\sum_{m}p(\lnot\alpha=1|m)p(m)=p(\lnot \alpha=1)}.
\end{align*}
This holds regardless of the value of $\mu$.
\end{proof}
Hereinafter, we replace $\alpha=0$ by $\lnot\alpha=1$ and abbreviate $\lnot\alpha=1$ to $\lnot\alpha$. We also abbreviate $M=m_{n}$ to $m_{n}$ and $D=d_{k}$ to $d_{k}$. 
\par
The hierarchy shown in Figure \ref{fig:hierarchy} illustrates Equation (\ref{eq:4}). The top layer of the hierarchy is a probability distribution over data, the middle layer is a probability distribution over states of the world, often referred to as models in formal logic, and the bottom layer is a probability distribution over a logical formula $\alpha$. A darker colour indicates a higher probability. Each element of a lower layer is an abstraction, i.e., selective ignorance, of the linked element of the upper layer.
\begin{figure}[t]
\begin{center}
 \includegraphics[scale=0.22]{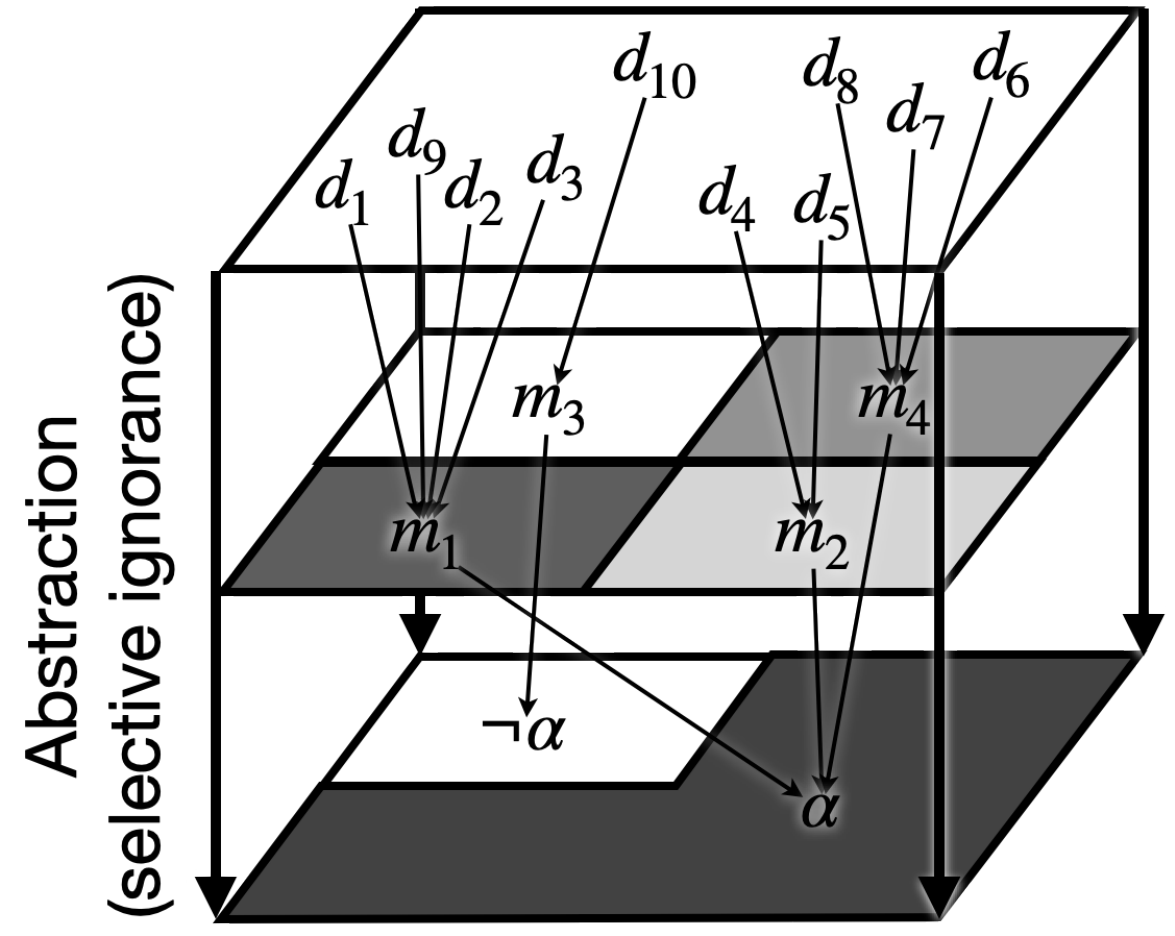}
  \caption{A schematic of how the probability distribution over data determines the probability distribution over logical formulas. For simplicity, an arrow is omitted if the formula at the end of the arrow is false in the model at the start of the arrow and if the model at the end of the arrow is not supported by the data at the start of the arrow.}
  \label{fig:hierarchy}
  \end{center}
\end{figure}
\begin{example}
Let $L$ be a propositional language built with two symbols, $rain$ and $wet$, meaning `rain falls' and `the road gets wet,' respectively. Let $m_{n}(1\leq n\leq 4)$ be the models of $L$ and $d_{k}(1\leq k\leq 10)$ be data about rain and road conditions. Table \ref{tab:hierarchy} shows which data support which models and which models specify which states of the world. The probability of $rain\to wet$ can be calculated using Equation (\ref{eq:4}) as follows.
\begin{align*}
&\textstyle{p(rain\to wet)}\\
&\textstyle{=\sum_{n=1}^{4}p(rain\to wet|m_{n})\sum_{k=1}^{10}p(m_{n}|d_{k})p(d_{k})}\\
&\textstyle{=\mu\sum_{k=1}^{10}p(m_{1}|d_{k})\frac{1}{10}+\mu\sum_{k=1}^{10}p(m_{2}|d_{k})\frac{1}{10}}\\
&\textstyle{~~~~~+(1-\mu)\sum_{k=1}^{10}p(m_{3}|d_{k})\frac{1}{10}+\mu\sum_{k=1}^{10}p(m_{4}|d_{k})\frac{1}{10}}\\
&\textstyle{=\frac{4}{10}\mu+\frac{2}{10}\mu+\frac{1}{10}(1-\mu)+\frac{3}{10}\mu=\frac{1}{10}+\frac{8}{10}\mu}
\end{align*}
Therefore, $p(rain\to wet)=9/10$ when $\mu=1$ or $\mu\to 1$, i.e., $\mu$ approaching 1. Figure \ref{fig:hierarchy} illustrates the calculation and visualises how the probability of $rain\to wet (=\alpha)$ is derived from data.
\end{example}
\begin{table}[t]
\centering
\caption{An example of Figure \ref{fig:hierarchy}. From the left, each column show data, models and the likelihood of the formula.}
\label{tab:hierarchy}
\begin{tabular}{c|ccc|c}
$D$ & $M$ & $rain$ & $wet$  & $p(rain\to wet|M)$\\\hline
$d_{1},d_{2},d_{3},d_{9}$ & $m_{1}$ & 0 & 0 & $\mu$\\
$d_{4},d_{5}$ & $m_{2}$ & 0 & 1 & $\mu$\\
$d_{10}$ & $m_{3}$ & 1 & 0 & $1-\mu$\\
$d_{6},d_{7},d_{8}$ & $m_{4}$ & 1 & 1 & $\mu$
\end{tabular}
\end{table}
\section{Correctness}
Section \ref{sec:logic} discusses the theoretical correctness of the generative reasoning model defined in the previous section in relation to the classical entailment. Section \ref{sec:learning} discusses its empirical correctness in relation to supervised learning. Each section is mostly self-contained.
\subsection{Logic as model-based reasoning}\label{sec:logic}
\subsubsection{Logical reasoning}\label{sec:consistency}
In the previous section, we saw that the probabilities of models and formulas are derived from data. As a result, the probability of a model without support from data and the probability of a formula satisfied only by such models both turn out to be zero. We refer to such models and formulas as being impossible.
\begin{definition}[Possibility]
Let $m$ be a model associated with $L$. $m$ is possible if $p(m)\neq 0$ and impossible otherwise.
\end{definition}
For $\Delta\subseteq L$, we use symbol $\pms{\Delta}$ to denote the set of all the possible models of $\Delta$, i.e., $\pms{\Delta}=\{m\in\ms{\Delta}|p(m)\neq 0\}$. We also use symbol $\pms{\Delta}_{m}$ such that $\pms{\Delta}_{m}=1$ if $m\in\pms{\Delta}$ and $\pms{\Delta}_{m}=0$ otherwise. Obviously, $\pms{\Delta}\subseteq\ms{\Delta}$, for all $\Delta\subseteq L$, and $\pms{\Delta}=\ms{\Delta}$ if all models are possible. If $\Delta$ is inconsistent, $\pms{\Delta}=\ms{\Delta}=\emptyset$. If $\Delta$ is an empty set or if it only includes tautologies then every model satisfies all the formulas in the possibly empty $\Delta$, and thus $\ms{\Delta}$ includes all the models.
\par
In this section, we look at generative reasoning models with $\mu=1$, $p(L,M,D;\mu=1)$, for reasoning from a consistent source of information.  The following theorem relates the probability of a formula to the probability of its models.
\begin{theorem}\label{thrm:consistency}
Let $p(L,M,D;\mu=1)$ be a generative reasoning model, and $\alpha\in L$ and $\Delta\subseteq L$ such that $\ms{\Delta}=\pms{\Delta}$.
\begin{align*}
p(\alpha|\Delta)=
\begin{cases}
\displaystyle{\frac{\sum_{m\in\ms{\Delta}\cap\ms{\alpha}}p(m)}{\sum_{m\in\ms{\Delta}}p(m)}}&\text{if }\ms{\Delta}\neq\emptyset\\
\text{undefined}&\text{otherwise}
\end{cases}
\end{align*}
\end{theorem}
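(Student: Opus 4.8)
The plan is to reduce $p(\alpha\mid\Delta)$ to a ratio of two model-weighted sums by unfolding the generative structure, and then to collapse the Bernoulli likelihoods at $\mu=1$ into $0/1$ membership indicators. First I would read $\Delta$ on the conditioning side as the joint event $\{\beta=1 : \beta\in\Delta\}$ and write $p(\alpha\mid\Delta)=p(\alpha,\Delta)/p(\Delta)$ by the definition of conditional probability, so that the whole task splits into computing the numerator $p(\alpha,\Delta)$ and the denominator $p(\Delta)$ separately.

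For each of these I would marginalise over models exactly as in Equation (\ref{eq:4}), writing $p(\Delta)=\sum_{m}p(\Delta\mid m)\,p(m)$ and $p(\alpha,\Delta)=\sum_{m}p(\alpha,\Delta\mid m)\,p(m)$, where $p(m)=\sum_{d}p(m\mid d)p(d)$ is the model marginal induced by the data. The key structural step is to invoke the conditional independence behind Equation (\ref{eq:1}): given a model, the truth values of distinct formulas are independent, so $p(\Delta\mid m)=\prod_{\beta\in\Delta}p(\beta\mid m)$ and $p(\alpha,\Delta\mid m)=p(\alpha\mid m)\prod_{\beta\in\Delta}p(\beta\mid m)$, each factor being the Bernoulli term $\mu^{\ms{\beta}_{m}}(1-\mu)^{1-\ms{\beta}_{m}}$.

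Setting $\mu=1$ is what does the real work: each Bernoulli factor becomes the indicator that equals $1$ when $m$ satisfies the formula and $0$ otherwise. Hence $p(\Delta\mid m)$ collapses to the indicator of $m\in\ms{\Delta}$ and $p(\alpha,\Delta\mid m)$ to the indicator of $m\in\ms{\Delta}\cap\ms{\alpha}$. The two sums therefore restrict to the relevant model sets, giving $p(\Delta)=\sum_{m\in\ms{\Delta}}p(m)$ and $p(\alpha,\Delta)=\sum_{m\in\ms{\Delta}\cap\ms{\alpha}}p(m)$, and dividing yields the claimed expression.

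The last thing I would check, and the only genuinely delicate point, is the defined/undefined boundary. Dividing is legitimate precisely when the denominator is nonzero, and this is exactly where the hypothesis $\ms{\Delta}=\pms{\Delta}$ enters: it guarantees that every model of $\Delta$ is possible, so $\sum_{m\in\ms{\Delta}}p(m)>0$ whenever $\ms{\Delta}\neq\emptyset$, while the sum is $0$, making the conditional undefined, exactly when $\ms{\Delta}=\emptyset$. Without this assumption one could have $\ms{\Delta}\neq\emptyset$ yet every model of $\Delta$ unsupported by data, which would wrongly leave the ratio as $0/0$; so the main care in the proof is not the algebra but verifying that possibility aligns the nonvanishing of the denominator with the stated nonemptiness condition.
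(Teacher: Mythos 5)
Your proposal is correct and follows essentially the same route as the paper's proof: expand $p(\alpha\mid\Delta)$ as a ratio of model-marginalised sums, factorise via the conditional independence of formulas given a model, and collapse the Bernoulli likelihoods at $\mu=1$ into satisfaction indicators so that the sums restrict to $\ms{\Delta}$ and $\ms{\Delta}\cap\ms{\alpha}$. If anything, your closing paragraph is more explicit than the paper about where the hypothesis $\ms{\Delta}=\pms{\Delta}$ is actually needed -- namely, to rule out the $0/0$ case where $\ms{\Delta}\neq\emptyset$ but every model of $\Delta$ has zero probability -- which the paper leaves implicit.
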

\begin{proof}
Let $|\Delta|$ denote the cardinality of $\Delta$. Dividing models into the ones satisfying all the formulas in $\Delta$ and the others, we have
{\small
\begin{align*}
&p(\alpha|\Delta)=\frac{\sum_{m}p(\alpha|m)p(\Delta|m)p(m)}{\sum_{m}p(\Delta|m)p(m)}\\
&=\frac{\displaystyle{\sum_{m\in\llbracket\Delta\rrbracket}p(m)p(\alpha|m)\mu^{|\Delta|}+\sum_{m\notin\llbracket\Delta\rrbracket}p(m)p(\alpha|m)p(\Delta|m)}}{\displaystyle{\sum_{m\in\llbracket\Delta\rrbracket}p(m)\mu^{|\Delta|}+\sum_{m\notin\llbracket\Delta\rrbracket}p(m)p(\Delta|m)}}.
\end{align*}
}
By definition, $p(\Delta|m)=\prod_{\beta\in\Delta}p(\beta|m)=\prod_{\beta\in\Delta}\mu^{\ms{\beta}_{m}}(1-\mu)^{1-{\ms{\beta}_{m}}}$. For all $m\notin\llbracket\Delta\rrbracket$, there is $\beta\in\Delta$ such that $\ms{\beta}_{m}=0$. Therefore, $p(\Delta|m)=0$ when $\mu=1$, for all $m\notin\llbracket\Delta\rrbracket$. We thus have
{\small
\begin{align*}
p(\alpha|\Delta)=\frac{\displaystyle{\sum_{m\in\llbracket\Delta\rrbracket}p(m)p(\alpha|m)1^{|\Delta|}}}{\displaystyle{\sum_{m\in\llbracket\Delta\rrbracket}p(m)1^{|\Delta|}}}=\frac{\displaystyle{\sum_{m\in\llbracket\Delta\rrbracket}p(m)1^{\ms{\alpha}_{m}}0^{1-\ms{\alpha}_{m}}}}{\displaystyle{\sum_{m\in\llbracket\Delta\rrbracket}p(m)}}.
\end{align*}
}
Since $1^{\ms{\alpha}_{m}}0^{1-\ms{\alpha}_{m}}=1^{1}0^{0}=1$ if $m\in\ms{\alpha}$ and $1^{\ms{\alpha}_{m}}0^{1-\ms{\alpha}_{m}}=1^{0}0^{1}=0$ if $m\notin\ms{\alpha}$, we have
{\small
\begin{align*}
p(\alpha|\Delta)=\frac{\sum_{m\in\llbracket\Delta\rrbracket\cap\ms{\alpha}}p(m)}{\sum_{m\in\llbracket\Delta\rrbracket}p(m)}.
\end{align*}
}
In addition, if $\ms{\Delta}=\emptyset$ then $p(\alpha|\Delta)$ is undefined due to division by zero.
\end{proof}
Recall that a formula $\alpha$ is a logical consequence of a set $\Delta$ of formulas, denoted by $\Delta\cent\alpha$, in classical logic iff (if and only if) $\alpha$ is true in every model in which $\Delta$ is true, i.e., $\ms{\Delta}\subseteq\ms{\alpha}$. The following Corollary shows the relationship between the generative reasoning model $p(L,M,D;\mu=1)$ and the classical consequence relation $\cent$.
\begin{corollary}\label{cor:consistent_reasoning}
Let $p(L,M,D;\mu=1)$ be a generative reasoning model, and $\alpha\in L$ and $\Delta\subseteq L$ such that $\ms{\Delta}=\pms{\Delta}$ and $\ms{\Delta}\neq\emptyset$. $p(\alpha|\Delta)=1$ iff $\Delta\cent\alpha$.
\end{corollary}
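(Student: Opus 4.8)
The plan is to invoke Theorem \ref{thrm:consistency} to reduce the claim to a statement about sums of model probabilities, and then to exploit the hypothesis $\ms{\Delta}=\pms{\Delta}$ to convert a statement about probabilities into a statement about set inclusion. Since $\ms{\Delta}\neq\emptyset$ and $\ms{\Delta}=\pms{\Delta}$, every $m\in\ms{\Delta}$ satisfies $p(m)\neq 0$, so the denominator $\sum_{m\in\ms{\Delta}}p(m)$ is strictly positive and $p(\alpha|\Delta)$ is well defined with the closed form given by the theorem. Recall that $\Delta\cent\alpha$ is by definition the inclusion $\ms{\Delta}\subseteq\ms{\alpha}$, so the entire corollary amounts to showing that the ratio in Theorem \ref{thrm:consistency} equals $1$ exactly when $\ms{\Delta}\subseteq\ms{\alpha}$.

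For the direction $\Delta\cent\alpha\Rightarrow p(\alpha|\Delta)=1$, I would assume $\ms{\Delta}\subseteq\ms{\alpha}$, so that $\ms{\Delta}\cap\ms{\alpha}=\ms{\Delta}$; the numerator then coincides term-for-term with the denominator, and since the denominator is nonzero the ratio is $1$. This direction does not really use possibility beyond guaranteeing the denominator is nonzero.

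For the converse $p(\alpha|\Delta)=1\Rightarrow\Delta\cent\alpha$, I would start from the equation
\begin{align*}
\sum_{m\in\ms{\Delta}\cap\ms{\alpha}}p(m)=\sum_{m\in\ms{\Delta}}p(m),
\end{align*}
and subtract the left-hand side to obtain $\sum_{m\in\ms{\Delta}\setminus\ms{\alpha}}p(m)=0$. Because every summand is a probability and hence nonnegative, a zero sum forces each term to vanish, i.e.\ $p(m)=0$ for every $m\in\ms{\Delta}\setminus\ms{\alpha}$. Here the hypothesis $\ms{\Delta}=\pms{\Delta}$ does the essential work: every $m\in\ms{\Delta}$ is possible, so $p(m)\neq 0$, and the only way to reconcile this with the previous line is $\ms{\Delta}\setminus\ms{\alpha}=\emptyset$, that is $\ms{\Delta}\subseteq\ms{\alpha}$, which is $\Delta\cent\alpha$.

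The step I expect to be the crux is this converse direction, and specifically the role of $\ms{\Delta}=\pms{\Delta}$. Without it, a model in $\ms{\Delta}\setminus\ms{\alpha}$ could carry zero probability, so $p(\alpha|\Delta)=1$ could hold even though $\ms{\Delta}\not\subseteq\ms{\alpha}$, breaking the equivalence with classical consequence. I would therefore make explicit that possibility is precisely the condition that prevents zero-probability models from masking a logical counterexample, which is what aligns the probabilistic notion $p(\alpha|\Delta)=1$ with the proof-theoretic notion $\Delta\cent\alpha$.
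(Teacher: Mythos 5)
Your proposal is correct and follows essentially the same route as the paper: both reduce the claim to the ratio formula of Theorem \ref{thrm:consistency} and observe that the hypotheses $\ms{\Delta}=\pms{\Delta}$ and $\ms{\Delta}\neq\emptyset$ force every $m\in\ms{\Delta}$ to have nonzero probability, so the ratio equals $1$ exactly when $\ms{\Delta}\subseteq\ms{\alpha}$. The paper states this equivalence in one line, whereas you spell out the two directions and make explicit that the possibility assumption is what rules out zero-probability counterexample models in the converse --- a useful elaboration, but not a different argument.
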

\begin{proof}
By the assumptions $\ms{\Delta}=\pms{\Delta}$ and $\ms{\Delta}\neq\emptyset$, $p(m)$ is non zero, for all $m$ in the non-empty set $\ms{\Delta}$. The assumptions thus prohibit a division by zero in Theorem \ref{thrm:consistency}. Therefore, $\frac{\sum_{m\in\ms{\Delta}\cap\ms{\alpha}}p(m)}{\sum_{m\in\ms{\Delta}}p(m)}=1$ iff $\ms{\alpha}\supseteq\ms{\Delta}$, i.e., $\Delta\models\alpha$.
\end{proof}
The following example shows the importance of the assumptions of $\ms{\Delta}=\pms{\Delta}$ and $\ms{\Delta}\neq\emptyset$ in Corollary \ref{cor:consistent_reasoning}.
\begin{table}[t]
\begin{center}
\caption{Some inconsistencies between generative and classical reasoning.}
\label{ex:classical_consequence}
\scalebox{0.8}{
\begin{tabular}{l|l|l}\label{table:consistent}
Generative reasoning & Classical reasoning & Rationale\\\hline
$p(wet|rain,\lnot rain)\neq1$ & $rain,\lnot rain\cent wet$ & $\ms{rain,\lnot rain}=\emptyset$\\
$p(wet|rain)=1$ & $rain\not\cent wet$ & $\ms{rain}\neq\pms{rain}$\\
$p(\lnot rain\lor wet)=1$ & $\not\cent \lnot rain\lor wet$ & $\ms{\emptyset}\neq\pms{\emptyset}$
\end{tabular}
}
\end{center}
\end{table}
\begin{example}
Suppose that the probability distribution in Table \ref{tab:hierarchy} is given by $p(M)=(m_{1},m_{2},m_{3},m_{4})=(0.5,0.2,0,0.3)$. Table \ref{ex:classical_consequence} exemplifies differences between the generative reasoning and classical consequence. The last column explains why the generative reasoning is inconsistent with the classical consequence relation. In particular, the rationale of the last example comes from the fact that Theorem \ref{thrm:consistency} explains $p(\lnot rain\lor wet)$ as $p(\lnot rain\lor wet|\emptyset)$.
\begin{align*}
&p(\lnot rain\lor wet)=p(\lnot rain\lor wet|\emptyset)\\
&=\frac{\sum_{m\in\ms{\emptyset}\cap\ms{\lnot rain\lor wet}}p(m)}{\sum_{m\in\ms{\emptyset}}p(m)}=\frac{\sum_{m\in\ms{\lnot rain\lor wet}}p(m)}{\sum_{m}p(m)}\\
&\textstyle{=\sum_{m\in\ms{\lnot rain\lor wet}}p(m)=1.}
\end{align*}
Here, $\ms{\emptyset}=\{m_{1},m_{2},m_{3},m_{4}\}$ but $\pms{\emptyset}=\{m_{1},m_{2},m_{4}\}$.
\end{example}
\par
Figure \ref{fig:consistent_reasoning} illustrates the assumptions of $\ms{\Delta}=\pms{\Delta}$ and $\ms{\Delta}\neq\emptyset$ for reasoning of $\alpha\in L$ from $\Delta\subseteq L$ using the generative reasoning model $p(L,M,D;\mu=1)$. Both $\alpha$ and $\Delta$ are consistent, i.e., $\ms{\alpha}\neq\emptyset$ and $\ms{\Delta}\neq\emptyset$, since there is at least one model satisfying $\alpha$ and all the formulas in $\Delta$. Such models are highlighted on the middle layer in blue and green, respectively. Figure \ref{fig:consistent_reasoning} also shows that every model satisfying all the formulas in $\Delta$ is possible, i.e., $\ms{\Delta}=\pms{\Delta}$, since there is at least one datum that supports each model of $\Delta$.
\begin{figure}[t]
\begin{center}
 \includegraphics[scale=0.3]{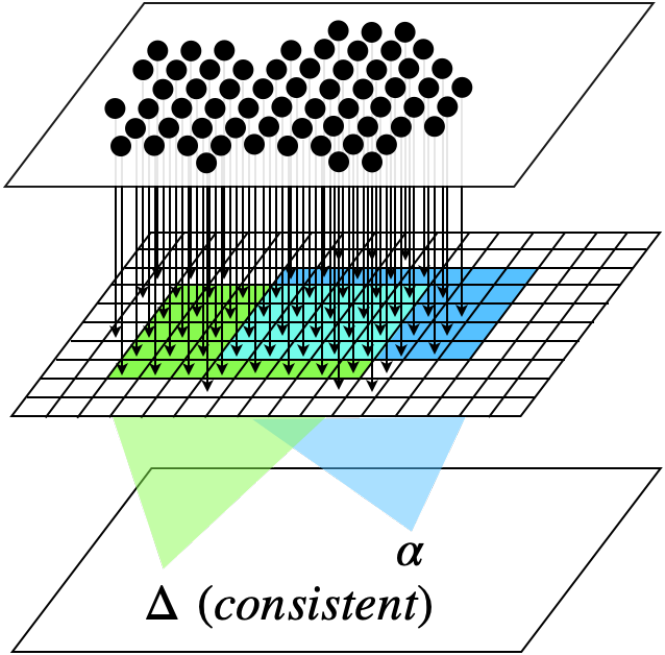}
  \caption{An illustration of the assumptions of $\ms{\Delta}=\pms{\Delta}$ and $\ms{\Delta}\neq\emptyset$ for reasoning of $\alpha\in L$ from $\Delta\subseteq L$ using the generative reasoning model $p(L,M,D;\mu=1)$. Each arrow from a datum to model, denoted respectively by a black circle on the top layer and a cell on the middle layer, represents that the datum supports the model. Each model with an incoming arrow thus has a non-zero probability. A model is coloured in green (resp. blue) if all the formulas in $\Delta$ are (resp. $\alpha$ is) true in the model.}
  \label{fig:consistent_reasoning}
  \end{center}
\end{figure}
\subsubsection{Empirical reasoning}\label{sec:possibility}
Theorem \ref{thrm:consistency} and Corollary \ref{cor:consistent_reasoning} depend on the assumption of $\ms{\Delta}=\pms{\Delta}$. In this section, we cancel the assumption to fully generalise our discussions in Section \ref{sec:consistency}. The following theorem relates the probability of a formula to the probability of its possible models.
\begin{theorem}\label{thrm:possibility}
Let $p(L,M,D;\mu=1)$ be a generative reasoning model, and $\alpha\in L$ and $\Delta\subseteq L$.
\begin{eqnarray*}
p(\alpha|\Delta)=
\begin{cases}
\displaystyle{\frac{\sum_{m\in\pms{\Delta}\cap\pms{\alpha}}p(m)}{\sum_{m\in\pms{\Delta}}p(m)}}&\text{if }\pms{\Delta}\neq\emptyset\\
\text{undefined}&\text{otherwise}
\end{cases}
\end{eqnarray*}
\end{theorem}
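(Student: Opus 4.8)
The plan is to piggyback on the computation already carried out in the proof of Theorem \ref{thrm:consistency}, exploiting the fact that the hypothesis $\ms{\Delta}=\pms{\Delta}$ was never actually needed to reach the intermediate quotient; dropping it changes only the condition under which that quotient is well defined. So the work splits into recovering the $\ms{\cdot}$-quotient verbatim, rewriting it as a $\pms{\cdot}$-quotient via a zero-weight argument, and re-identifying the undefined branch.

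First I would begin exactly as before, writing
\[
p(\alpha|\Delta)=\frac{\sum_m p(\alpha|m)\,p(\Delta|m)\,p(m)}{\sum_m p(\Delta|m)\,p(m)},
\]
and using $\mu=1$ to annihilate every term with $m\notin\ms{\Delta}$: for such $m$ there is some $\beta\in\Delta$ with $\ms{\beta}_m=0$, so the factorised likelihood $p(\Delta|m)=\prod_{\beta\in\Delta}\mu^{\ms{\beta}_m}(1-\mu)^{1-\ms{\beta}_m}$ carries a factor $(1-\mu)=0$. This reproduces the identity $p(\alpha|\Delta)=\frac{\sum_{m\in\ms{\Delta}\cap\ms{\alpha}}p(m)}{\sum_{m\in\ms{\Delta}}p(m)}$, with no appeal whatsoever to possibility — which is precisely why the earlier assumption can be discarded at this stage.

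The second step is the only genuinely new ingredient: impossible models contribute nothing to either sum, since they carry zero probability weight. By definition $\pms{\Delta}=\{m\in\ms{\Delta}\mid p(m)\neq0\}$, so the impossible models in $\ms{\Delta}$ add $0$ and $\sum_{m\in\ms{\Delta}}p(m)=\sum_{m\in\pms{\Delta}}p(m)$; likewise $\pms{\Delta}\cap\pms{\alpha}$ is exactly the set of possible models lying in $\ms{\Delta}\cap\ms{\alpha}$, whence $\sum_{m\in\ms{\Delta}\cap\ms{\alpha}}p(m)=\sum_{m\in\pms{\Delta}\cap\pms{\alpha}}p(m)$. Substituting turns the $\ms{\cdot}$-quotient into the claimed $\pms{\cdot}$-quotient.

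Finally I would pin down the undefined branch. Since each $m\in\pms{\Delta}$ has $p(m)>0$, the denominator $\sum_{m\in\pms{\Delta}}p(m)$ is strictly positive exactly when $\pms{\Delta}\neq\emptyset$ and vanishes otherwise, so division by zero occurs precisely when $\pms{\Delta}=\emptyset$. I do not expect a serious obstacle — the argument is the earlier calculation plus a little zero-weight bookkeeping — but the one point demanding care is that $\pms{\Delta}=\emptyset$ now absorbs two genuinely distinct situations that the hypothesis of Theorem \ref{thrm:consistency} had excluded: the inconsistent case $\ms{\Delta}=\emptyset$, and the consistent-but-unsupported case in which $\ms{\Delta}\neq\emptyset$ yet no datum maps to any model of $\Delta$. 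Making explicit that both collapse into the single clause $\pms{\Delta}=\emptyset$ is exactly what the generalisation delivers over the previous theorem.
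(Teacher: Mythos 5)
Your proof is correct and follows essentially the same route as the paper: reuse the quotient over $\ms{\Delta}$ from Theorem \ref{thrm:consistency}, drop the zero-probability models to pass to $\pms{\Delta}$, and relocate the undefined branch to $\pms{\Delta}=\emptyset$. If anything you are slightly more careful than the paper, which simply cites Theorem \ref{thrm:consistency} even though its hypothesis $\ms{\Delta}=\pms{\Delta}$ no longer holds, whereas you correctly observe that the derivation of the intermediate quotient never used that hypothesis.
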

\begin{proof}
$p(m)=0$, for all $m\in\ms{\Delta}\setminus\pms{\Delta}$ and $m\in\ms{\alpha}\setminus\pms{\alpha}$. From Theorem \ref{thrm:consistency}, we thus have 
\begin{align*}
\displaystyle{\frac{\sum_{m\in\ms{\Delta}\cap\ms{\alpha}}p(m)}{\sum_{m\in\ms{\Delta}}p(m)}}=\displaystyle{\frac{\sum_{m\in\pms{\Delta}\cap\pms{\alpha}}p(m)}{\sum_{m\in\pms{\Delta}}p(m)}}.
\end{align*}
The condition of $\ms{\Delta}\neq\emptyset$ should be replaced by $\pms{\Delta}\neq\emptyset$, since there is a possibility of $\ms{\Delta}\neq\emptyset$ and $\pms{\Delta}=\emptyset$. Given the condition of $\ms{\Delta}\neq\emptyset$, this causes a probability undefined due to a division by zero.
\end{proof}
In Section \ref{sec:consistency}, we used the classical consequence relation in Corollary \ref{cor:consistent_reasoning} for a logical characterisation of Theorem \ref{thrm:consistency}. In this section, we define an alternative consequence relation for a logical characterisation of Theorem \ref{thrm:possibility}.
\begin{definition}[Empirical consequence]
Let $\Delta\subseteq L$ and $\alpha\in L$. $\alpha$ is an empirical consequence of $\Delta$, denoted by $\Delta\ent\alpha$, if $\pms{\Delta}\subseteq\pms{\alpha}$.
\end{definition}
\begin{proposition}
Let $\Delta\subseteq L$ and $\alpha\in L$. If $\Delta\cent\alpha$ then $\Delta\ent\alpha$, but not vice versa.
\end{proposition}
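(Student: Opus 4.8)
The plan is to split the claim into its two halves: the forward implication, which I would prove in full generality, and the converse, which I would refute with a single counterexample. The forward direction ``$\Delta\cent\alpha \Rightarrow \Delta\ent\alpha$'' follows immediately from the way $\pms{\cdot}$ is carved out of $\ms{\cdot}$, so I would dispatch it first by a short element-chasing argument. Recall that $\pms{\Gamma}=\{m\in\ms{\Gamma}\mid p(m)\neq 0\}$ for any $\Gamma\subseteq L$, so $\pms{\Gamma}\subseteq\ms{\Gamma}$ always holds. Assuming $\Delta\cent\alpha$, i.e.\ $\ms{\Delta}\subseteq\ms{\alpha}$, I would take an arbitrary $m\in\pms{\Delta}$; then $m\in\ms{\Delta}$ and $p(m)\neq 0$, whence $m\in\ms{\alpha}$ by the inclusion, and therefore $m\in\pms{\alpha}$ since $p(m)\neq 0$. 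This yields $\pms{\Delta}\subseteq\pms{\alpha}$, which is exactly $\Delta\ent\alpha$. No appeal to the value of $\mu$ or to the rest of the generative model is needed beyond the definition of possibility.

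For the ``not vice versa'' part, the plan is to reuse the running rain/wet scenario, where the converse fails precisely because classical consequence quantifies over all models whereas empirical consequence ignores the impossible ones. Concretely I would take $\Delta=\{rain\}$ and $\alpha=wet$ under the distribution $p(M)=(m_{1},m_{2},m_{3},m_{4})=(0.5,0.2,0,0.3)$ introduced earlier, so that the state $m_{3}$ (with $rain$ true and $wet$ false) is impossible. Classically $rain\not\cent wet$, since $\ms{rain}=\{m_{3},m_{4}\}\not\subseteq\{m_{2},m_{4}\}=\ms{wet}$. Empirically, however, discarding the zero-probability model gives $\pms{rain}=\{m_{4}\}\subseteq\{m_{2},m_{4}\}=\pms{wet}$, so $rain\ent wet$. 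Thus $\Delta\ent\alpha$ holds while $\Delta\cent\alpha$ fails, which establishes that the implication cannot be reversed.

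I do not anticipate a genuine obstacle here: the forward direction is a one-line set inclusion, and the converse is settled by the explicit counterexample already tabulated in Table \ref{ex:classical_consequence} (with model contents read off from Table \ref{tab:hierarchy}). The only point needing slight care is to make the counterexample self-contained---stating the truth values in each model and the chosen distribution so that both $\ms{rain}\not\subseteq\ms{wet}$ and $\pms{rain}\subseteq\pms{wet}$ can be verified directly---rather than merely citing the table.
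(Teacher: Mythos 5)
Your proof is correct and follows essentially the same route as the paper: the forward direction is the same observation that deleting the zero-probability models from both sides of $\ms{\Delta}\subseteq\ms{\alpha}$ preserves the inclusion, and the converse is refuted by exhibiting a model in $\ms{\Delta}\setminus\ms{\alpha}$ with probability zero. The only difference is that the paper's counterexample is schematic (``suppose $\ms{\Delta}=\ms{\alpha}\cup\{m\}$ with $p(m)=0$'') while yours instantiates it concretely with the rain/wet models, which if anything makes the existence of such a configuration explicit.
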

\begin{proof}
($\Rightarrow$) $\Delta\ent\alpha$ iff $\pms{\Delta}\subseteq\pms{\alpha}$ where $\pms{\Delta}=\{m\in\ms{\Delta}|p(m)\neq0\}$ and $\pms{\alpha}=\{m\in\ms{\alpha}|p(m)\neq0\}$. $\ms{\Delta}\subseteq\ms{\alpha}$ implies $\pms{\Delta}\subseteq\pms{\alpha}$, since $\ms{\Delta}\setminus X\subseteq\ms{\alpha}\setminus X$, for all sets $X$. ($\Leftarrow$) Suppose $\Delta$, $\alpha$ and $m$ such that $\ms{\Delta}=\ms{\alpha}\cup\{m\}$ and $p(m)=0$. Then, $\Delta\ent\alpha$, but $\Delta\not\cent\alpha$.
\end{proof}
The following Corollary shows the relationship between the generative reasoning model $p(L,M,D;\mu=1)$ and the empirical consequence relation $\ent$.
\begin{corollary}\label{cor:possible_reasoning}
Let $p(L,M,D;\mu=1)$ be a generative reasoning model, and $\alpha\in L$ and $\Delta\subseteq L$ such that $\pms{\Delta}\neq\emptyset$. $p(\alpha|\Delta)=1$ iff $\Delta\ent\alpha$.
\end{corollary}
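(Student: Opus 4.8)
The plan is to specialise Theorem~\ref{thrm:possibility} and then read off the equivalence from elementary properties of finite sums with positive weights, exactly mirroring the proof of Corollary~\ref{cor:consistent_reasoning} but with $\pms{\cdot}$ in place of $\ms{\cdot}$ and $\ent$ in place of $\cent$. Since $\pms{\Delta}\neq\emptyset$, every $m\in\pms{\Delta}$ satisfies $p(m)\neq0$ by the definition of possibility, so the denominator $\sum_{m\in\pms{\Delta}}p(m)$ of the expression supplied by Theorem~\ref{thrm:possibility} is strictly positive and $p(\alpha|\Delta)$ is a well-defined ratio. This already disposes of the division-by-zero caveat, so the whole argument reduces to deciding when this ratio equals $1$.

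First I would note that, since $\pms{\Delta}\cap\pms{\alpha}\subseteq\pms{\Delta}$, the ratio equals $1$ exactly when its numerator and denominator coincide, i.e.\ when $\sum_{m\in\pms{\Delta}\cap\pms{\alpha}}p(m)=\sum_{m\in\pms{\Delta}}p(m)$. The key step is then to convert this equality of sums into an equality of index sets: because each summand $p(m)$ is strictly positive for $m\in\pms{\Delta}$, omitting any model of $\pms{\Delta}$ from the numerator would strictly decrease it, so the two sums agree if and only if $\pms{\Delta}\cap\pms{\alpha}=\pms{\Delta}$, which is the set-theoretic statement $\pms{\Delta}\subseteq\pms{\alpha}$.

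Finally I would invoke the definition of empirical consequence: $\pms{\Delta}\subseteq\pms{\alpha}$ is precisely $\Delta\ent\alpha$, giving both directions at once. I do not expect a genuine obstacle here; the only point requiring care is the sum-to-set conversion, which relies essentially on strict positivity of $p(m)$ on $\pms{\Delta}$. This is exactly why the statement is phrased with $\pms{\cdot}$ rather than $\ms{\cdot}$, since an impossible model carries weight $0$ and could otherwise be silently included or excluded without affecting either sum. It is also worth remarking that, for $m\in\pms{\Delta}$, membership in $\pms{\alpha}$ is equivalent to membership in $\ms{\alpha}$, as $p(m)\neq0$ is already guaranteed; this observation is not needed for the proof but clarifies why the possible-model intersection behaves like the ordinary one on the relevant domain.
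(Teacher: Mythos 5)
Your proposal is correct and follows essentially the same route as the paper's own proof: specialise Theorem~\ref{thrm:possibility}, use the strict positivity of $p(m)$ on the non-empty set $\pms{\Delta}$ to equate the ratio being $1$ with $\pms{\Delta}\subseteq\pms{\alpha}$, and conclude via the definition of $\ent$. Your write-up merely spells out the sum-to-set conversion that the paper leaves implicit.
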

\begin{proof}
$\Delta\ent\alpha$ iff $\pms{\Delta}\subseteq\pms{\alpha}$. $p(m)\neq0$, for all $m\in\pms{\Delta}$. Thus, from Theorem \ref{thrm:possibility}, $p(\alpha|\Delta)=1$ iff $\pms{\Delta}\subseteq\pms{\alpha}$.
\end{proof}
Note that Theorem \ref{thrm:possibility} and Corollary \ref{cor:possible_reasoning} no longer depend on the assumption of $\ms{\Delta}=\pms{\Delta}$ required in Theorem \ref{thrm:consistency} and Corollary \ref{cor:consistent_reasoning}.
\par
Figure \ref{fig:possible_reasoning} illustrates the assumption of $\pms{\Delta}\neq\emptyset$ for reasoning of $\alpha\in L$ from $\Delta\subseteq L$ using the generative reasoning model $p(L, M, D; \mu=1)$. It shows that both $\alpha$ and $\Delta$ are consistent, i.e., $\ms{\alpha}\neq\emptyset$ and $\ms{\Delta}\neq\emptyset$, since there is at least one model for both $\alpha$ and $\Delta$ satisfying the formulas. It also shows that $\Delta$ and $\alpha$ are possible, i.e., $\pms{\Delta}\neq\emptyset$ and $\pms{\alpha}\neq\emptyset$, since there is at least one model for both $\alpha$ and $\Delta$ supported by data.
\begin{figure}[t]
\begin{center}
 \includegraphics[scale=0.3]{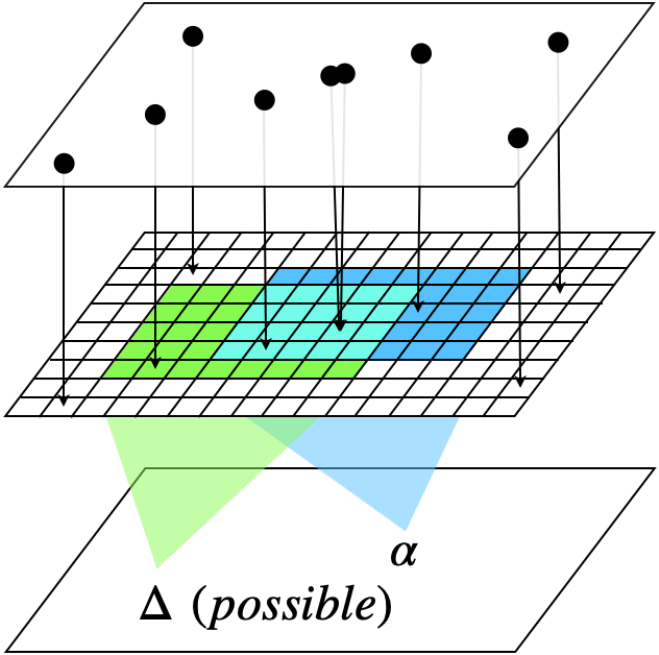}
  \caption{An illustration of the assumption of $\pms{\Delta}\neq\emptyset$ for reasoning of $\alpha\in L$ from $\Delta\subseteq L$ using the generative reasoning model $p(L,M,D;\mu=1)$. The assumption of $\ms{\Delta}=\pms{\Delta}$ assumed in Section \ref{sec:consistency} and illustrated in Figure \ref{fig:consistent_reasoning} is cancelled. It is shown that no data supports some of the models satisfying $\alpha$ and all the formulas in $\Delta$.}
  \label{fig:possible_reasoning}
  \end{center}
\end{figure}
\subsection{Learning as data-based reasoning}\label{sec:learning}
The MNIST dataset contains 70,000 images (60,000 training and 10,000 test images) of handwritten digits from 0 to 9. Each image comprises $28\times 28 (=784)$ pixels in width\texttimes height. Each pixel has a greyscale from 0 to 255 representing pure black and white colours, respectively. We look at two machine learning tasks on MNIST: digit prediction and image generation.
\subsubsection{Digit prediction}
Consider a generative reasoning model $p(L, M, D; \mu)$ where $L$ is built with propositional symbols $digit_{i} (0\leq i\leq 9)$ and $pixel_{j} (1\leq j\leq 28\times 28)$ ($dig_{i}$ and $pix_{j}$ for short), where $digit_{i}$ represents that an image is of digit $i$, and $pixel_{j}$ that the greyscale of the $j$th pixel of an image is above the threshold of 30. 
All the ten digit variables and 28\texttimes 28 pixel variables can take the two states, true of false. $L$ thus has $2^{10+28\times 28}$ models in total, and each of the models is a value of the random variable $M$. Each training image is a value of the random variable $D$. We use the following fact in the machine learning context.
\begin{proposition}\label{prop:data-basedCP}
Let $p(L$, $M$, $D; \mu\in(0.5,1))$ be a generative reasoning model, and $\alpha\in L$ and $\Delta\subseteq L$.
{\small
\begin{align*}
p(\alpha|\Delta)=\frac{\sum_{d}p(\alpha|d)\prod_{\beta\in\Delta}p(\beta|d)}{\sum_{d}\prod_{\beta\in\Delta}p(\beta|d)}
\end{align*}
}
\end{proposition}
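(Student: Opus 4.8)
The plan is to expand $p(\alpha|\Delta)$ by its definition as $p(\alpha,\Delta)/p(\Delta)$, push the marginalisation all the way down to the data layer, and exploit the fact that the model is a deterministic function of the datum to collapse the sum over $M$. First I would write, using the sum rule together with the factorisation of the full joint distribution in Equation (\ref{eq:3}),
\begin{align*}
p(\alpha,\Delta)=\sum_{d}\sum_{m}p(\alpha|m)\Big(\prod_{\beta\in\Delta}p(\beta|m)\Big)p(m|d)p(d).
\end{align*}
Because $p(M=m|D=d)$ equals $1$ when $m=m(d)$ and $0$ otherwise, the inner sum over $m$ collapses to the single term $m=m(d)$, so that $\sum_{m}p(\alpha|m)p(m|d)=p(\alpha|m(d))=p(\alpha|d)$ and likewise $\sum_{m}p(\beta|m)p(m|d)=p(\beta|d)$.

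The same determinism lets me factor $p(\Delta|d)=\prod_{\beta\in\Delta}p(\beta|d)$: marginalising $M$ out of the conditional independence given $M$ (Equation (\ref{eq:1})) together with the data-irrelevance of truth values (Equation (\ref{eq:2})) yields conditional independence of the formulas given $D$ alone. Thus $p(\alpha,\Delta)=\sum_{d}p(\alpha|d)\prod_{\beta\in\Delta}p(\beta|d)\,p(d)$ and, by the same argument with $\alpha$ omitted, $p(\Delta)=\sum_{d}\prod_{\beta\in\Delta}p(\beta|d)\,p(d)$. Taking the ratio and noting that $p(d)=1/K$ is the same constant for every datum, it factors out of both numerator and denominator and cancels, leaving exactly
\begin{align*}
p(\alpha|\Delta)=\frac{\sum_{d}p(\alpha|d)\prod_{\beta\in\Delta}p(\beta|d)}{\sum_{d}\prod_{\beta\in\Delta}p(\beta|d)}.
\end{align*}

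The only point requiring care --- and where the hypothesis $\mu\in(0.5,1)$ is used --- is that the ratio is well defined, i.e. that the denominator is non-zero. Since $\mu\in(0.5,1)$ gives $0<1-\mu<\mu<1$, every factor $p(\beta|d)=\mu^{\ms{\beta}_{m(d)}}(1-\mu)^{1-\ms{\beta}_{m(d)}}$ is strictly positive regardless of whether $\beta$ is true or false at $m(d)$; hence each finite product $\prod_{\beta\in\Delta}p(\beta|d)$ is strictly positive and, with at least one datum present, so is their sum. This is the key contrast with the $\mu=1$ theorems, where setting $\mu=1$ annihilates the contribution of every datum whose model falsifies some $\beta\in\Delta$ and so restricts the support to $\ms{\Delta}$; here no datum is discarded, which is precisely what turns the expression into a weighted all-nearest-neighbour rule. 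I expect the main obstacle to be stating the collapse of the $M$-sum and the induced conditional independence given $D$ cleanly, rather than any hard computation.
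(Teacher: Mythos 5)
Your proposal is correct and follows essentially the same route as the paper: expand the conditional probability via the joint factorisation, collapse the sum over $M$ using the point-mass $p(m|d)$ to identify $p(\gamma|d)=p(\gamma|m(d))$, and invoke $\mu\notin\{0,1\}$ to rule out division by zero. Your explicit cancellation of the constant $p(d)=1/K$ and the remark that determinism of $m(d)$ (rather than conditional independence given $M$ alone) is what licenses the factorisation $p(\Delta|d)=\prod_{\beta\in\Delta}p(\beta|d)$ are minor points the paper leaves implicit, but the argument is the same.
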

\begin{proof}
For all $\gamma\in L$ and data $d$, we have
{\small 
\begin{align*}
&p(\gamma|d)=\frac{\sum_{m}p(\gamma,m,d)}{p(d)}=\frac{\sum_{m}p(\gamma|m)p(m|d)p(d)}{p(d)}\\
&=\frac{p(\gamma|m(d))\cancel{p(d)}}{\cancel{p(d)}}=p(\gamma|m(d))=\mu^{\ms{\gamma}_{m(d)}}(1-\mu)^{\ms{\gamma}_{m(d)}}.
\end{align*}
}
Since $\mu\notin\{0,1\}$, $p(\gamma|d)\neq 0$. We also have
{\small
\begin{align*}
&p(\alpha|\Delta)=\frac{\sum_{d}\sum_{m}p(\alpha|m)\prod_{\beta\in\Delta}p(\beta|m)p(m|d)p(d)}{\sum_{d}\sum_{m}\prod_{\beta\in\Delta}p(\beta|m)p(m|d)p(d)}\\
&=\frac{\displaystyle{\sum_{d}p(\alpha|m(d))\prod_{\beta\in\Delta}p(\beta|m(d))}}{\displaystyle{\sum_{d}\prod_{\beta\in\Delta}p(\beta|m(d))}}=\frac{\displaystyle{\sum_{d}p(\alpha|d)\prod_{\beta\in\Delta}p(\beta|d)}}{\displaystyle{\sum_{d}\prod_{\beta\in\Delta}p(\beta|d)}}.
\end{align*}
}
Since $\mu\notin\{0,1\}$, this does not cause division by zero.
\end{proof}
\par
For digit prediction, we first look at the generative reasoning model $p(L$, $M$, $D;\mu\to1)$ where $\mu\to 1$ represents that $\mu$ approaches one, i.e., $\lim_{\mu\to 1}$. Given all the 60k training images, we use the following instance of Proposition \ref{prop:data-basedCP}.
\begin{align}\label{eq:mnist1}
&p(Digit_{i}|Pixel_{1},...,Pixel_{28\times 28})\nonumber\\
&=\frac{\sum_{k=1}^{60k}p(Digit_{i}|d_{k})\prod_{j=1}^{28\times 28}p(Pixel_{j}|d_{k})}{\sum_{k=1}^{60k}\prod_{j=1}^{28\times 28}p(Pixel_{j}|d_{k})}
\end{align}
Here, we capitalised the propositional symbols so that it is clear that they are not formulas being true, e.g., $digit_{i}=1$, but random variables without observed values.
\begin{figure}[t]
\centering
 \includegraphics[scale=0.24]{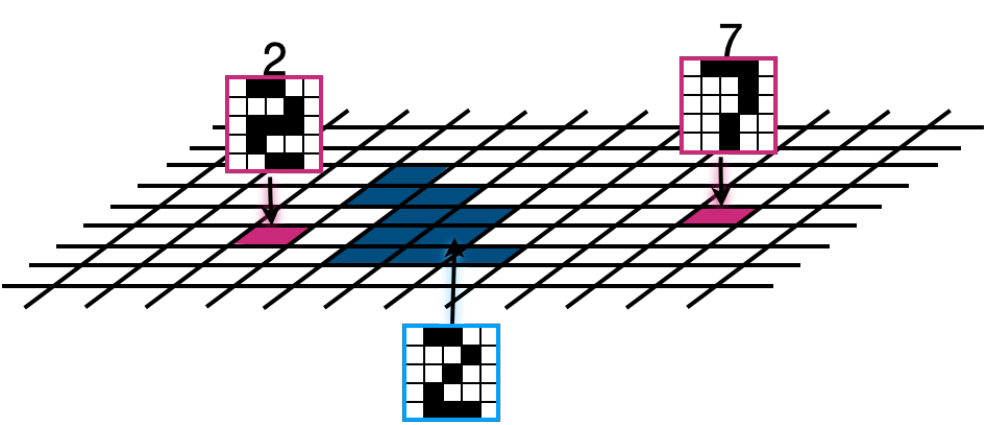}
 \includegraphics[scale=0.24]{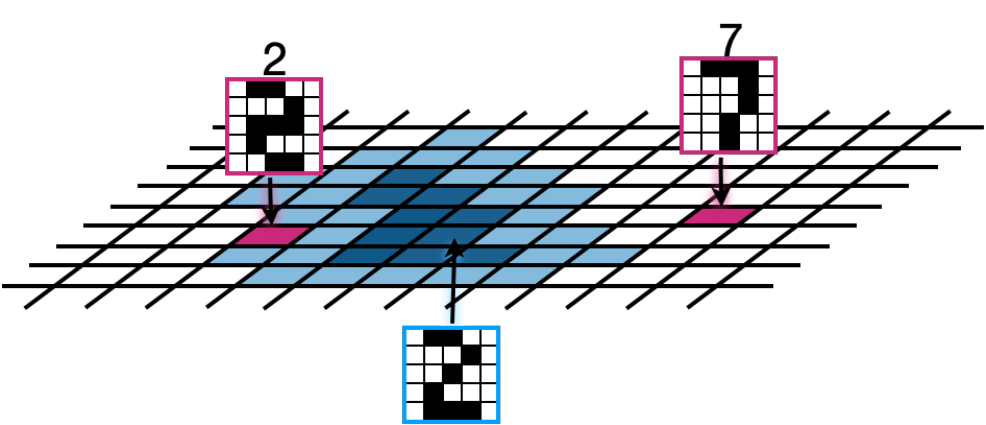}
 \\
 \includegraphics[scale=0.24]{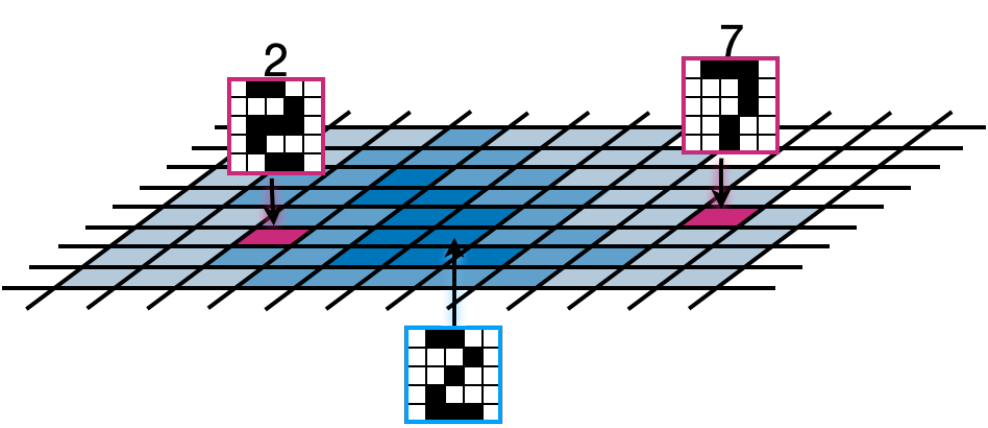}
 \includegraphics[scale=0.24]{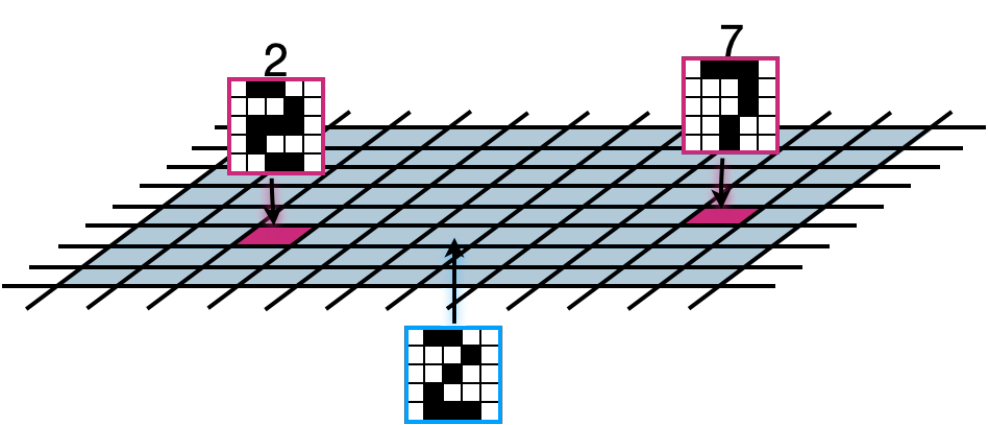}
\caption{Each cell of the grid is a model of $L$. The training and test images are shown above and below the grid, respectively. The blue cells on the top-left grid show that the prediction fails with $\mu=1$, since no training image is found in the models of the test image. The light blue cells on the top-right grid show that the prediction succeeds with $\mu\to 1$, since the limit expands the models of the test image until its best matched training image is found. The bottom left and right grids illustrate $\mu\in(0.5,1)$ and $\mu=0.5$, respectively.}
\label{fig:DigitPrediction_illustration}
\end{figure}
\begin{example}[Digit prediction with $p(L,M,D;\mu\to1)$]\label{ex:dp1}
Let $L$ be built with propositional symbols $digit_{i} (0\leq i\leq 9)$ and $pixel_{j} (1\leq j\leq 5\times 5)$. Let the following two 5\texttimes 5-pixel images with the purple borders be training images and the following one 5\texttimes 5-pixel image with the blue border be a test image.
\\
\centerline{$\imgeqL{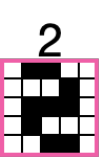}~~~~~~\imgeqL{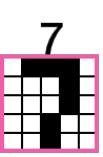}~~~~~~\imgeqL{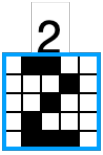}$}
\par
The label of each image is the digit of the image. Each 5\texttimes 5-pixel training image with its digit instantiates the random variable $D$. Equation (\ref{eq:mnist1}) can then be instantiated as follows, where $\bm{Pixel}=(Pixel_{1},...,Pixel_{5\times 5})$.
{\small
\begin{align}\label{eq:ToyMNIST_prediction1}
&p(Dig_{i}|\bm{Pixel})=\frac{\sum_{k=1}^{2}p(Dig_{i}|d_{k})\prod_{j=1}^{5\times 5}p(Pix_{j}|d_{k})}{\sum_{k=1}^{2}\prod_{j=1}^{5\times 5}p(Pix_{j}|d_{k})}\\
&=\frac{\displaystyle{p(Dig_{i}|\imgeq{figs/train1_2.pdf})\prod_{j=1}^{5\times 5}p(Pix_{j}|\imgeq{figs/train1_2.pdf})+p(Dig_{i}|\imgeq{figs/train2_7.pdf})\prod_{j=1}^{5\times 5}p(Pix_{j}|\imgeq{figs/train2_7.pdf})}}{\displaystyle{\prod_{j=1}^{5\times 5}p(Pix_{j}|\imgeq{figs/train1_2.pdf})+\prod_{j=1}^{5\times 5}p(Pix_{j}|\imgeq{figs/train2_7.pdf})}}\nonumber
\end{align}
}
The map $m$ from each training image to a model of $L$ is obvious. We have the following likelihoods, where $j$ indexes pixels from left to right and top to bottom.
{\small
\begin{align*}
&p(Dig_{i}\!\!=\!\!1|\imgeq{figs/train1_2.pdf})\!\!=\!\!
\begin{cases}
\mu~~\text{ if } i=2\\
1-\mu\text{ o/w}
\end{cases}
p(Dig_{i}\!\!=\!\!1|\imgeq{figs/train2_7.pdf})\!\!=\!\!
\begin{cases}
\mu~~\text{ if } i=7\\
1-\mu\text{ o/w}
\end{cases}
\\
&p(Pix_{j}\!\!=\!\!1|\imgeq{figs/train1_2.pdf})\!\!=\!\!
\begin{cases}\nonumber
\mu~\text{if }j\in\{1,4\text{--}8,10,11,15,16,18\text{--}22,25\}\\
1-\mu~\text{otherwise}
\end{cases}
\\
&p(Pix_{j}\!\!=\!\!1|\imgeq{figs/train2_7.pdf})\!\!=\!\!
\begin{cases}\nonumber
\mu~\text{if }j\in\{1,5\text{--}8,10\text{--}13,15\text{--}17,19\text{--}22,24,25\}\\
1-\mu~\text{otherwise}
\end{cases}
\end{align*}
}
Let $pixel_{j}$ be 1 if the $j$th pixel value of the test image is above the threshold of thirty, and 0 otherwise. We have
\begin{align*}
pixel_{j}=
\begin{cases}
1 & \text{if } j\in\{1, 4\text{--}8, 10\text{--}12, 14\text{--}16, 18\text{--}21, 25\}\\
0 & \text{otherwise, i.e., } j\in\{2, 3, 9, 13, 17, 22\text{--}24\}.
\end{cases}
\end{align*}
Let $\bm{pixel}$, abbreviated to $\bm{pix}$, denote $(Pixel_{1}=pixel_{1}$, $Pixel_{2}=pixel_{2}$, ..., $Pixel_{5\times 5}=pixel_{5\times 5})$. Equation (\ref{eq:ToyMNIST_prediction1}) can then be instantiated as follows.
{\small
\begin{align*}
p(Dig_{i}=1|\bm{pix})=\frac{p(Dig_{i}=1|\imgeq{figs/train1_2.pdf})X_{1}+p(Dig_{i}=1|\imgeq{figs/train2_7.pdf})X_{2}}{X_{1}+X_{2}}
\end{align*}
\begin{numcases}{=}
\frac{\mu^{23}(1-\mu)^{3}+\mu^{18}(1-\mu)^{8}}{\mu^{22}(1-\mu)^{3}+\mu^{18}(1-\mu)^{7}} & \text{if $i=2$}\label{eq:ToyMNIST_prediction1_11}\\
\frac{\mu^{22}(1-\mu)^{4}+\mu^{19}(1-\mu)^{7}}{\mu^{22}(1-\mu)^{3}+\mu^{18}(1-\mu)^{7}} & \text{if $i=7$}\label{eq:ToyMNIST_prediction1_12}\\
\frac{\mu^{22}(1-\mu)^{4}+\mu^{18}(1-\mu)^{8}}{\mu^{22}(1-\mu)^{3}+\mu^{18}(1-\mu)^{7}} & \text{otherwise}\label{eq:ToyMNIST_prediction1_13}
\end{numcases}
}
Here, $X_{1}$ and $X_{2}$ were calculated as follows.
{\small
\begin{align*}
&\textstyle{X_{1}=\prod_{j=1}^{5\times 5}p(Pixel_{j}=pixel_{j}|\imgeq{figs/train1_2.pdf})=\mu^{22}(1-\mu)^{3}}\\
&\textstyle{X_{2}=\prod_{j=1}^{5\times 5}p(Pixel_{j}=pixel_{j}|\imgeq{figs/train2_7.pdf})=\mu^{18}(1-\mu)^{7}}
\end{align*}
}
Given $\mu\to 1$, Equations (\ref{eq:ToyMNIST_prediction1_11}), (\ref{eq:ToyMNIST_prediction1_12}) and (\ref{eq:ToyMNIST_prediction1_13}) thus turn out to be
{\small
\begin{numcases}{}
\lim_{\mu\to 1}\frac{\mu^{5}+(1-\mu)^{5}}{\mu^{4}+(1-\mu)^{4}}=\frac{1}{1}=1 & \text{ if $i=2$}\label{eq:ToyMNIST_prediction1_21}\\
\lim_{\mu\to 1}\frac{\mu^{4}(1-\mu)+\mu(1-\mu)^{4}}{\mu^{4}+(1-\mu)^{4}}=\frac{0}{1}=0 & \text{ if $i=7$}\label{eq:ToyMNIST_prediction1_22}\\
\lim_{\mu\to 1}\frac{\mu^{4}(1-\mu)+(1-\mu)^{5}}{\mu^{4}+(1-\mu)^{4}}=\frac{0}{1}=0 & \text{ otherwise}.\label{eq:ToyMNIST_prediction1_23}
\end{numcases}
}
Figure \ref{fig:DigitPrediction_illustration} illustrates the digit prediction with different $\mu$ values. It shows a reasonable role of the limit used in Equations (\ref{eq:ToyMNIST_prediction1_21}), (\ref{eq:ToyMNIST_prediction1_22}) and (\ref{eq:ToyMNIST_prediction1_23}). The limit allows us to cancel out $(1-\mu)^{3}$ from the equations. Here, $(1-\mu)$ represents a mismatch between the test image and the training image, and thus, $(1-\mu)^{3}$ represents a mismatch between the test image and the training image with the best match for the test image. The limit thus subtracts the mismatch from all the training images. As a result, the digit of the given image turns out to be the digit of its best matched training image.
\end{example}
\par
As shown in Equations (\ref{eq:ToyMNIST_prediction1_21}), (\ref{eq:ToyMNIST_prediction1_22}) and (\ref{eq:ToyMNIST_prediction1_23}), the denominator turns out to be the number of training images whose pixel values are maximally the same as $Pixel_{1},...,Pixel_{28\times 28}$, the pixel values of a test image. Amongst them, the numerator turns out to be the number of training images whose digit values are the same as $Digit_{i}$, the digit value of the test image. As a result, the above conditional probability can be seen as an all-nearest neighbours method, which generalises the k-nearest neighbours (kNN) method classifying test data by a majority vote from the k nearest training data. This is a reasonable solution to a well-known problem that it is often difficult to settle an appropriate value of k for kNN methods. Moreover, the search for the nearest neighbours and the use of them in prediction are given a unified computational account by Equation (\ref{eq:mnist1}).
\par
In the machine learning context, we until now saw generative reasoning models $p(L,M,D;\mu\to 1)$ as a sort of an all-nearest neighbours method. We will next see generative reasoning models $p(L,M,D;\mu\in(0.5,1))$ as a smoothed or weighted version of the all-nearest neighbours method.
\begin{figure}[t]
\centering
 \includegraphics[scale=0.24]{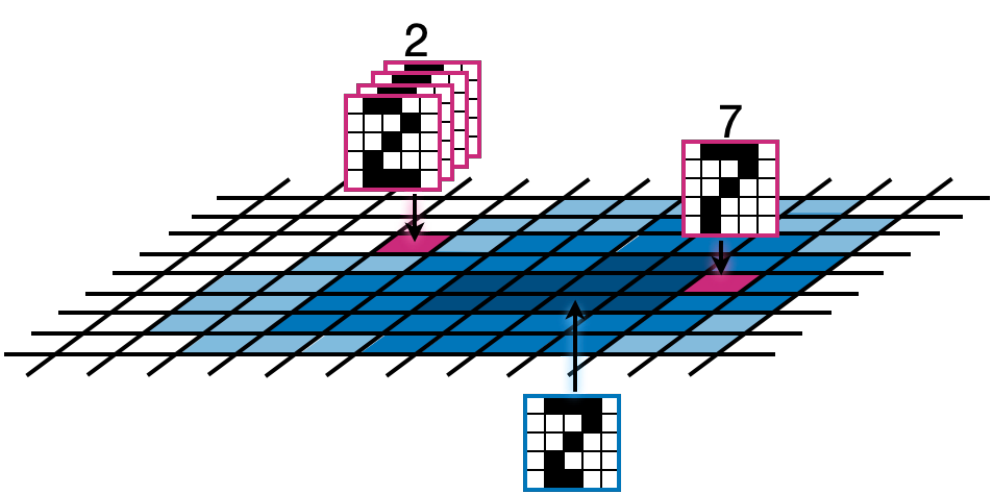}
 \includegraphics[scale=0.23]{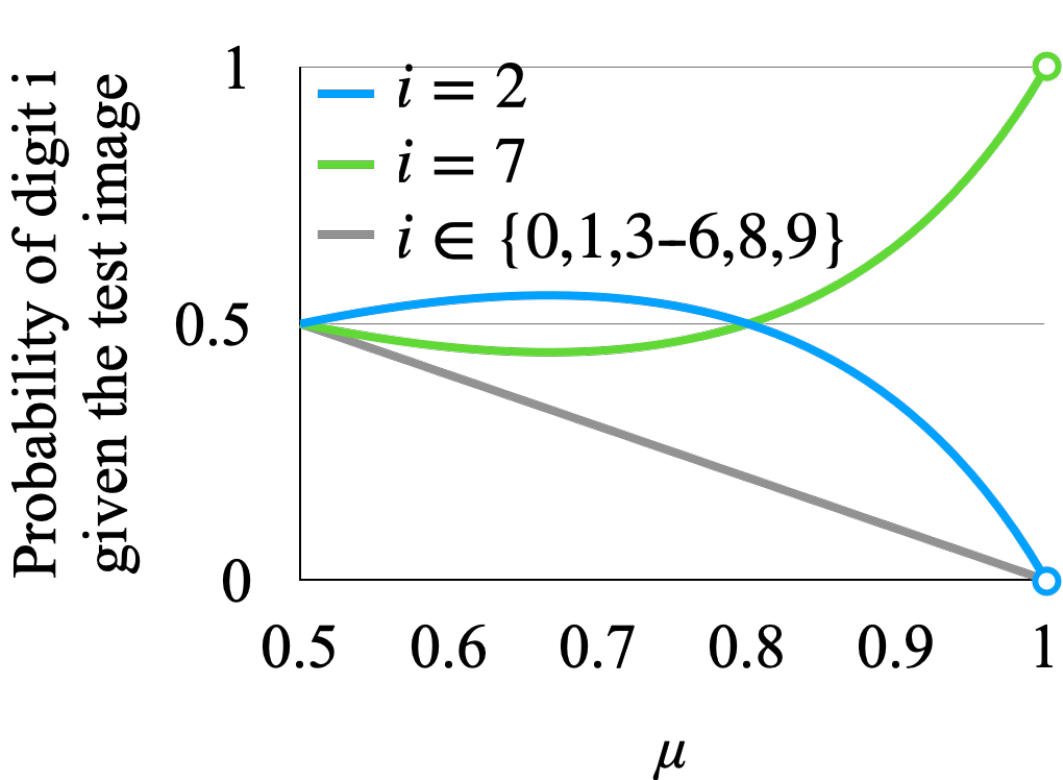}
\caption{The prediction fails with $\mu\to 1$, since the test image and its nearest training image have different digits (see the medium blue cells). It can succeed with $\mu\in(0.5,1)$, since the models of the test image is expanded beyond its nearest training image for its second and further nearest training images (see the light blue cells). The curves on the right show the values of Expressions (\ref{eq:prediction2_1}), (\ref{eq:prediction2_2}) and (\ref{eq:prediction2_3}).}
\label{fig:mnistToy_muto1_3}
\end{figure}
\begin{example}[Digit prediction with $p(L,M,D;\mu\in(0.5,1))$ (Continued)]
Consider the following five 5\texttimes 5-pixel training images and one 5\texttimes 5-pixel test image with the labels of their digits.
\\
\centerline{$\imgeqL{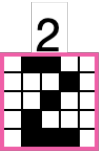}~~~~~~\imgeqL{figs/train3_2.pdf}~~~~~~\imgeqL{figs/train3_2.pdf}~~~~~~\imgeqL{figs/train3_2.pdf}~~~~~~\imgeqL{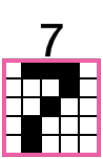}~~~~~~\imgeqL{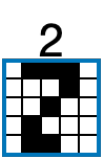}$}
\par
Going through the same process we discussed in Example \ref{ex:dp1}, we can now instantiate Equation (\ref{eq:mnist1}) as follows.
{\small
\begin{numcases}{p(Dig_{i}=1|\bm{pix})=}
\frac{5\mu(1-\mu)}{4(1-\mu)+\mu} & \text{if } i=2 \label{eq:prediction2_1}\\
\frac{4(1-\mu)^{2}+\mu^{2}}{4(1-\mu)+\mu} & \text{if } i=7 \label{eq:prediction2_2}\\
\frac{4(1-\mu)^{2}+\mu(1-\mu)}{4(1-\mu)+\mu} & \text{otherwise} \label{eq:prediction2_3}
\end{numcases}
}
Given $\mu\to 1$, each equation turns out to be 0, 1 and 0, respectively, which are all reasonable as the test image and its best matched training image have different digits. However, given $\mu\in(0.5,0.8)$, the probability of the digit being two is equal or larger than the probability of the digit being seven (see the curves in Figure \ref{fig:mnistToy_muto1_3}). This is also reasonable as the test image and all of the relatively large number of its second matched training images have the same digits. Here, the qualitative effect of the single best match for the test image is suppressed by the quantitative effect of the multiple second match. As shown in Figure \ref{fig:mnistToy_muto1_3}, $\mu$ functions to balance the effects of matching quality and quantity.
\end{example}
Figure \ref{fig:LCs} shows the learning curves generated by Equation (\ref{eq:mnist1}) using the real MNIST dataset. The baseline is given by the kNN method with different k values. We use AUC, the area under ROC (receiver operating characteristic) curve, for performance evaluation, since the generative reasoning model returns probabilistic outputs. $\mu\to 1$ experiences overfitting, since the number of the training images best matched for each test image is relatively too small to discard anomalies. This is similar to the 1NN method where only one nearest neighbour training image is used in prediction.
\begin{figure}[t]
\begin{center}
\includegraphics[scale=0.3]{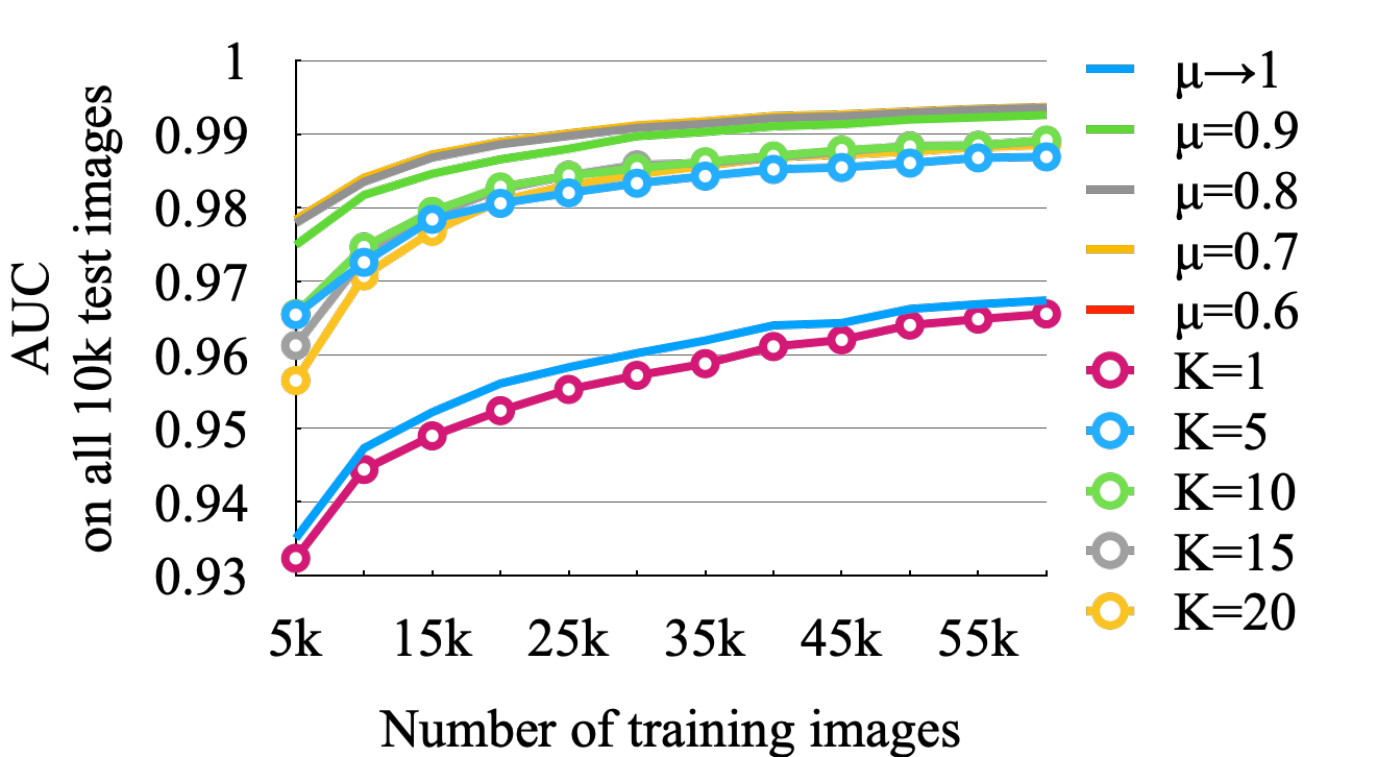}
\end{center}
\caption{The learning curves of the generative reasoning model with different $\mu$ values. The baseline is given by the kNN method built using the `KNeighborsClassifier' function \citep{scikit-learn} with default setting, i.e., the `uniform' weights and `auto' algorithm. The training images were extracted from the beginning.}
\label{fig:LCs}
\end{figure}
\subsubsection{Image generation}

\begin{figure}[t]
\begin{center}
\includegraphics[scale=0.06]{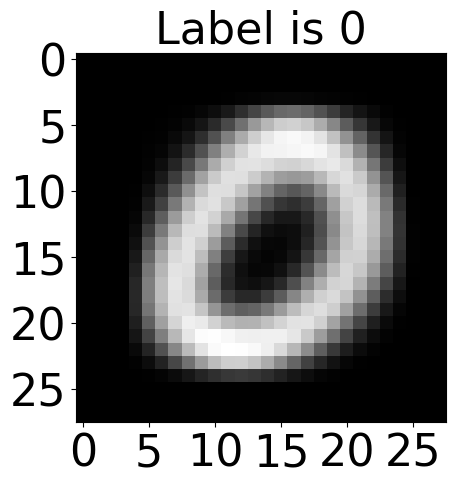}
\includegraphics[scale=0.06]{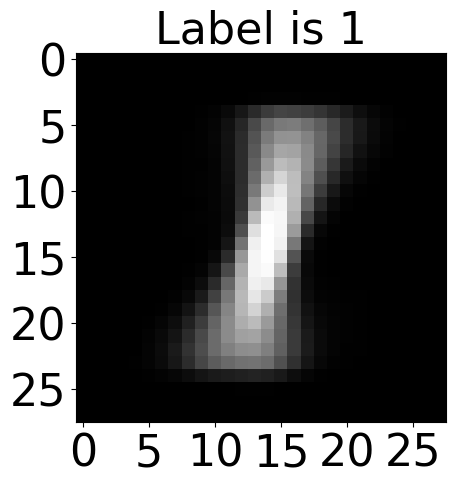}
\includegraphics[scale=0.06]{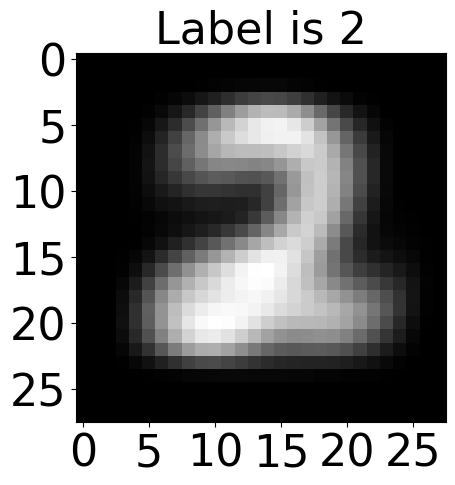}
\includegraphics[scale=0.06]{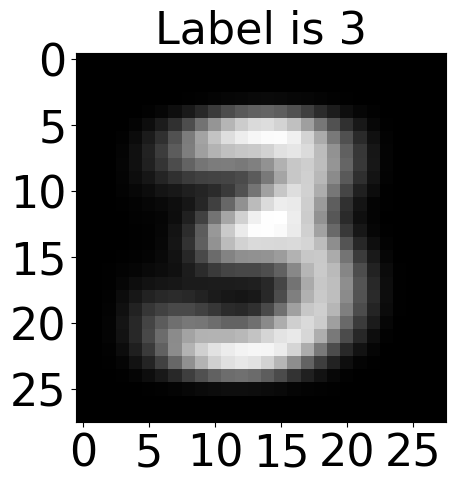}
\includegraphics[scale=0.06]{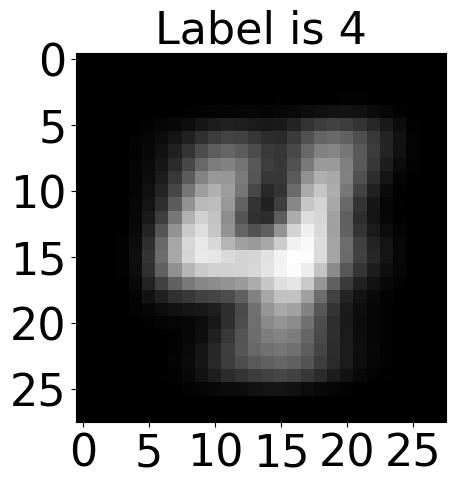}
\includegraphics[scale=0.06]{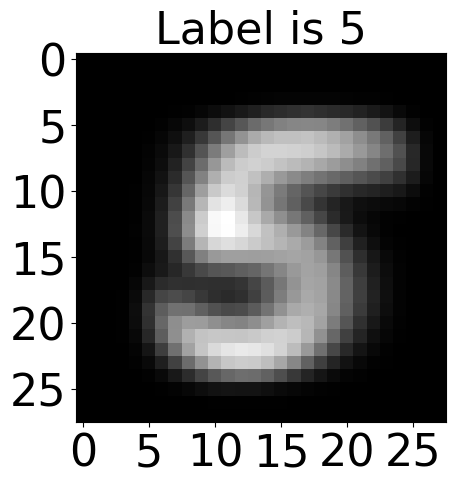}
\includegraphics[scale=0.06]{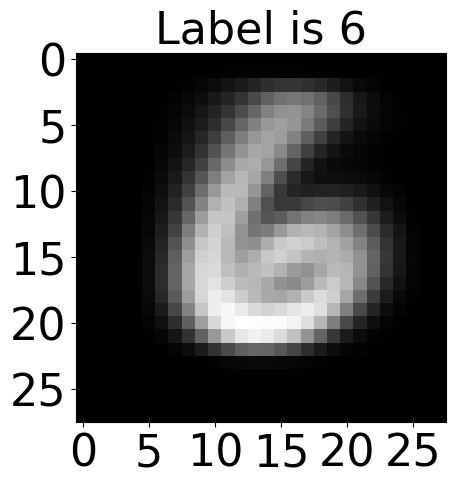}
\includegraphics[scale=0.06]{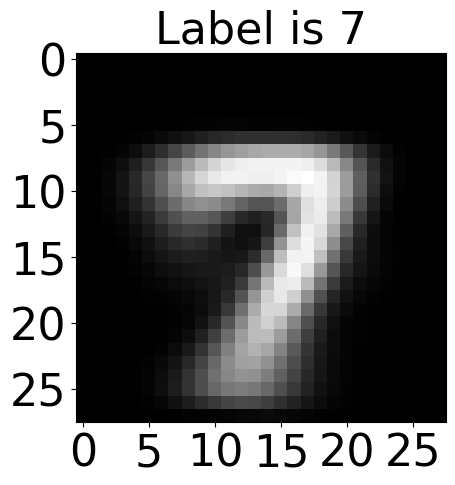}
\includegraphics[scale=0.06]{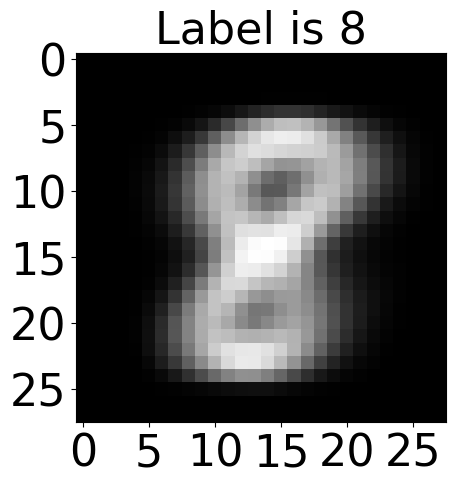}
\includegraphics[scale=0.06]{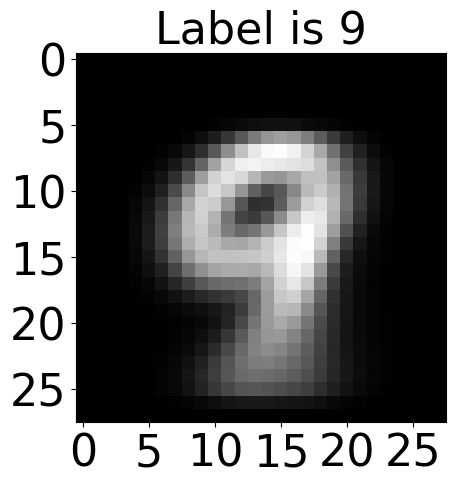}
\caption{The images of all the ten digits. We normalised $p(Pixel_{j}|Digit_{i})\in[0,1]$ to the grayscale between 0 (black) and 255 (white), for all pixels $j$ ($1\leq j\leq 28\times 28$).}
\label{fig:generation}
\end{center}
\begin{center}
\includegraphics[scale=0.06]{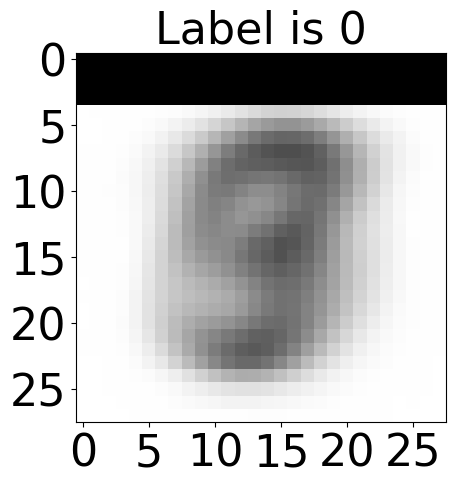}
\includegraphics[scale=0.06]{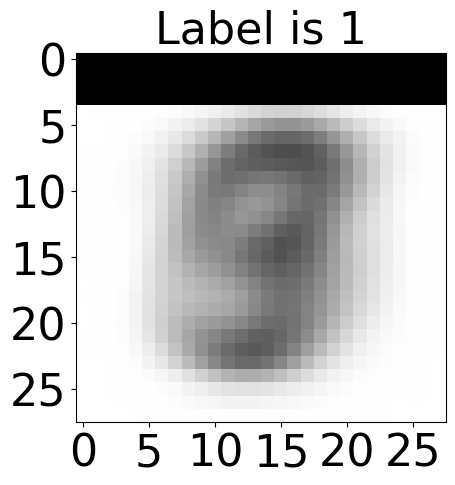}
\includegraphics[scale=0.06]{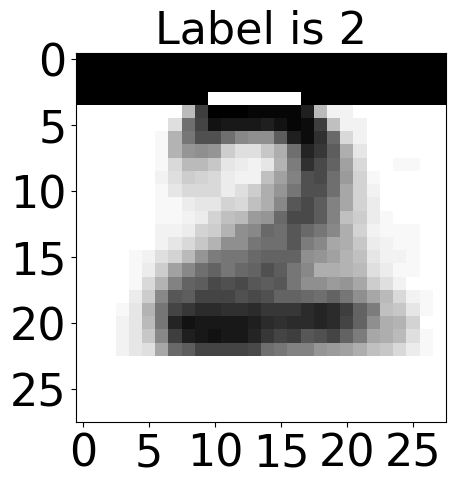}
\includegraphics[scale=0.06]{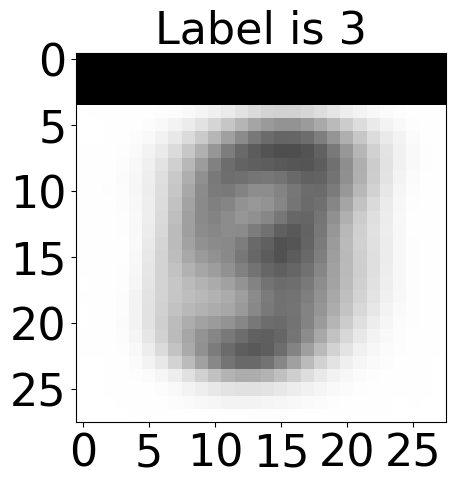}
\includegraphics[scale=0.06]{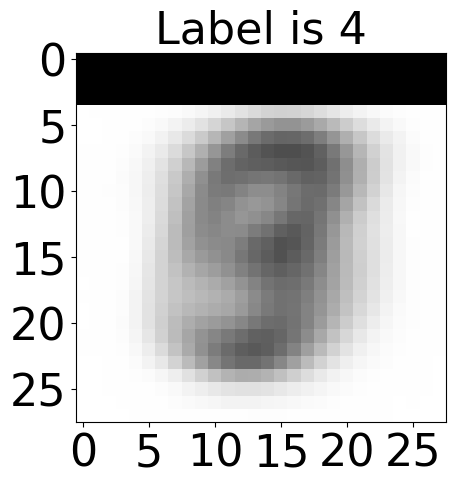}
\includegraphics[scale=0.06]{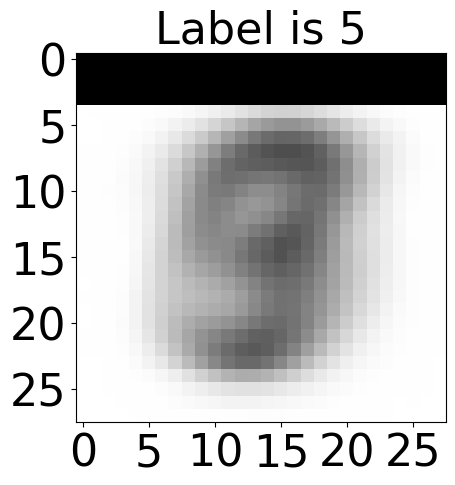}
\includegraphics[scale=0.06]{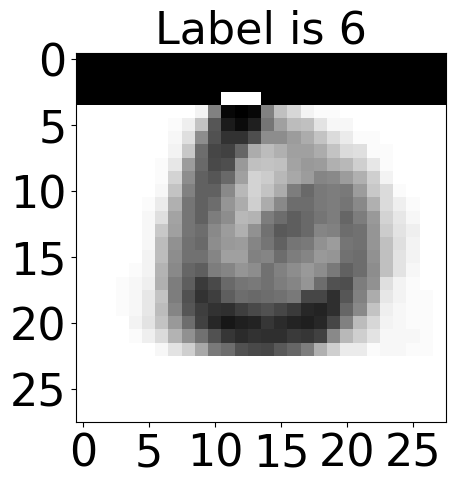}
\includegraphics[scale=0.06]{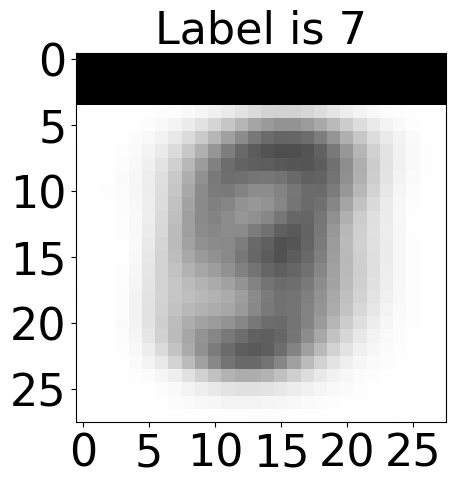}
\includegraphics[scale=0.06]{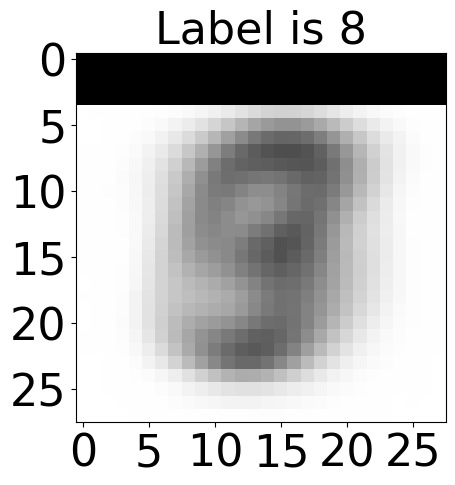}
\includegraphics[scale=0.06]{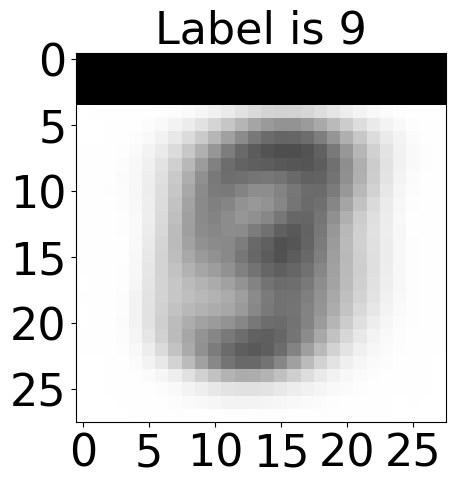}
\\
\includegraphics[scale=0.06]{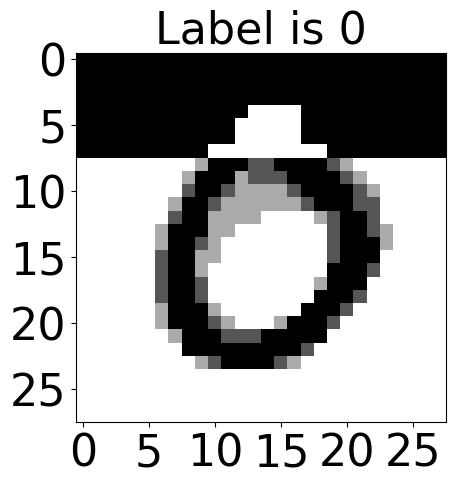}
\includegraphics[scale=0.06]{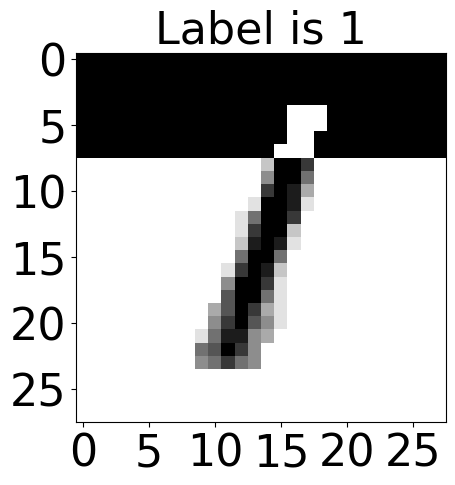}
\includegraphics[scale=0.06]{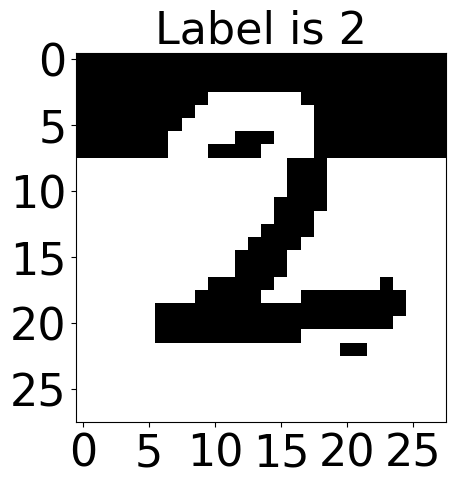}
\includegraphics[scale=0.06]{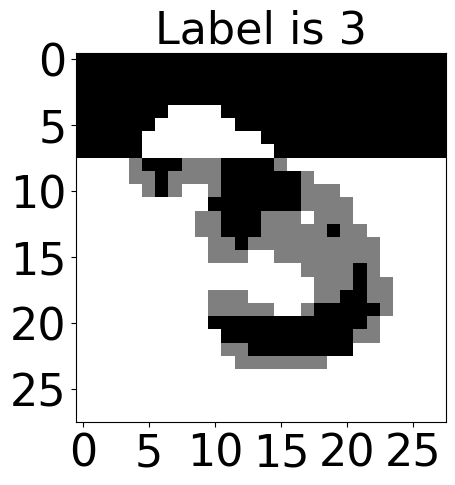}
\includegraphics[scale=0.06]{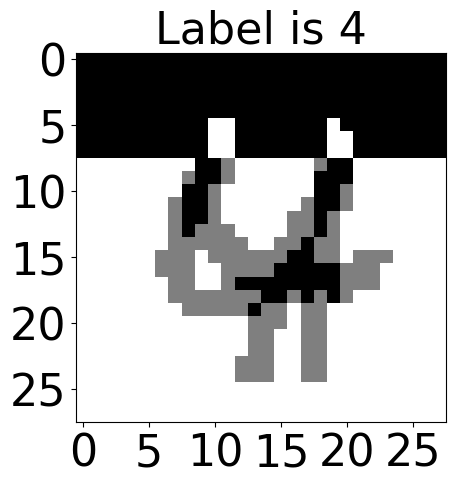}
\includegraphics[scale=0.06]{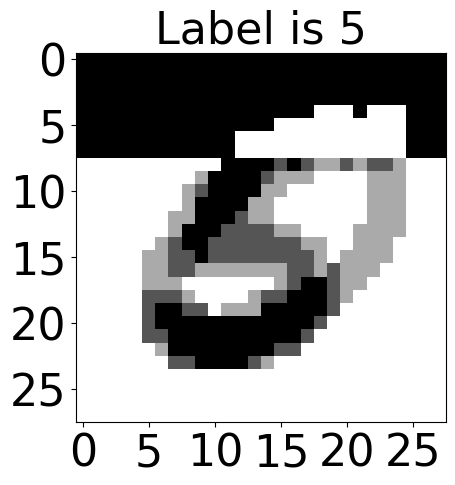}
\includegraphics[scale=0.06]{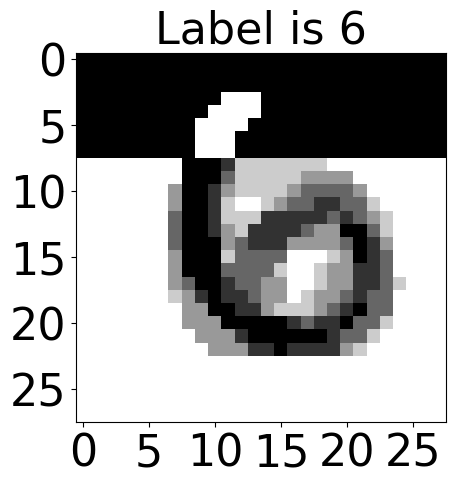}
\includegraphics[scale=0.06]{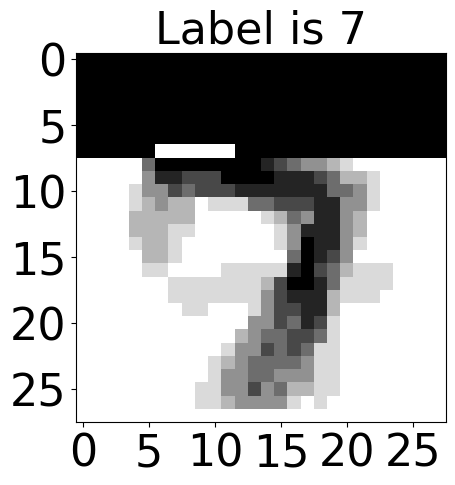}
\includegraphics[scale=0.06]{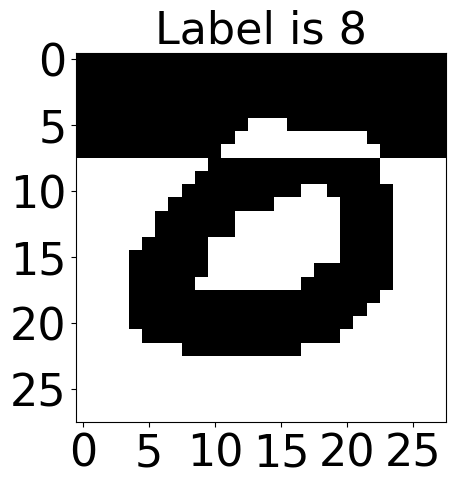}
\includegraphics[scale=0.06]{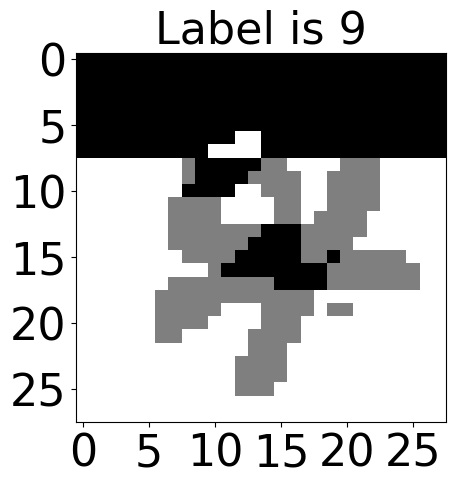}
\\
\includegraphics[scale=0.06]{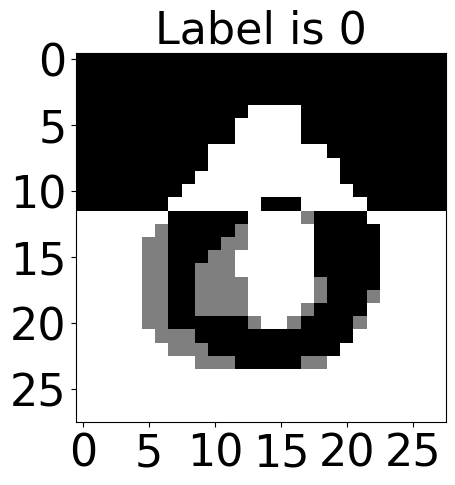}
\includegraphics[scale=0.06]{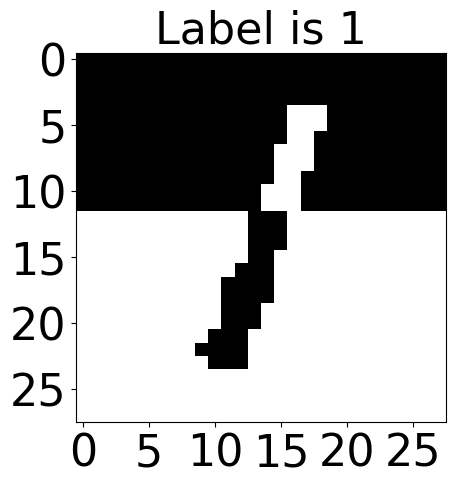}
\includegraphics[scale=0.06]{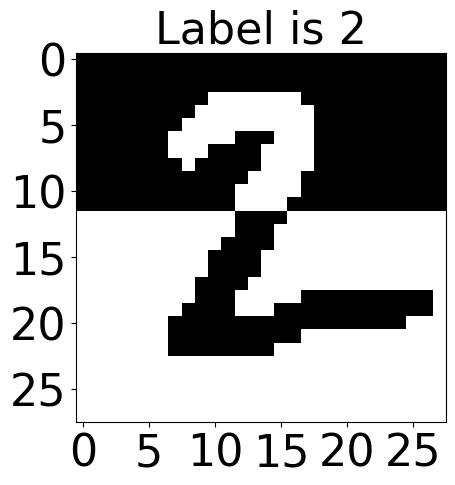}
\includegraphics[scale=0.06]{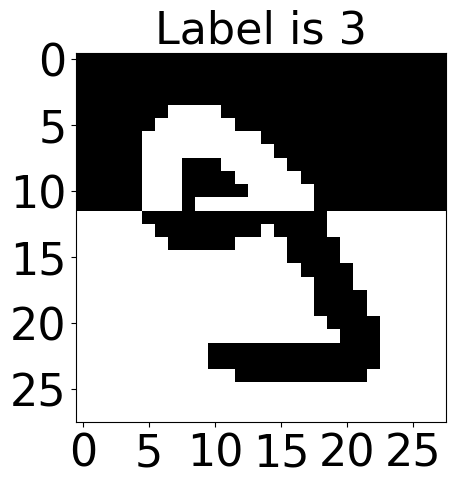}
\includegraphics[scale=0.06]{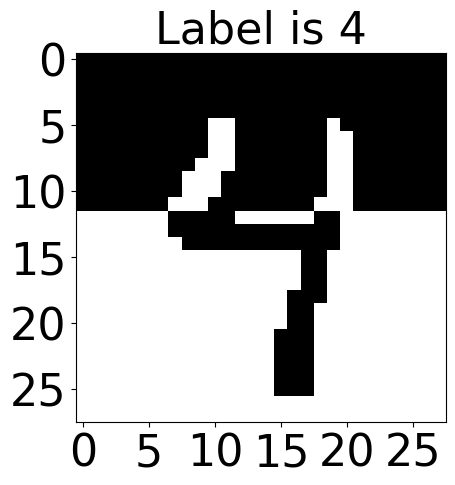}
\includegraphics[scale=0.06]{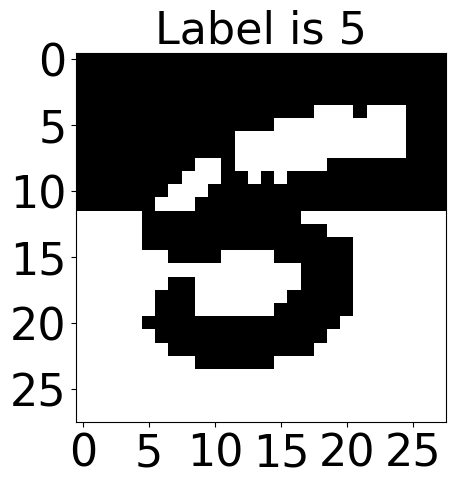}
\includegraphics[scale=0.06]{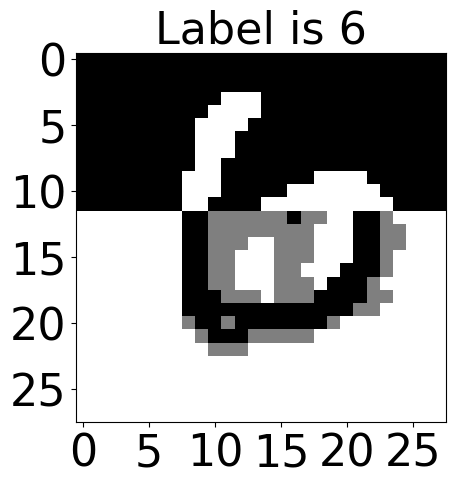}
\includegraphics[scale=0.06]{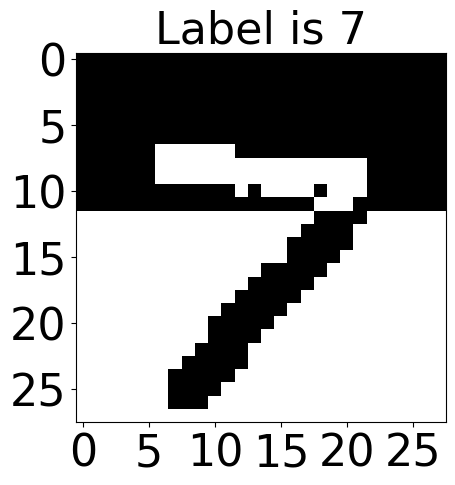}
\includegraphics[scale=0.06]{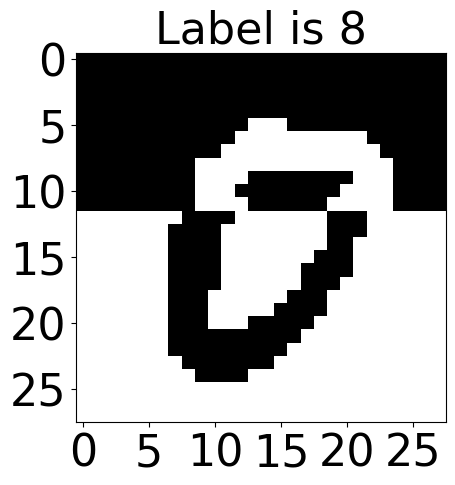}
\includegraphics[scale=0.06]{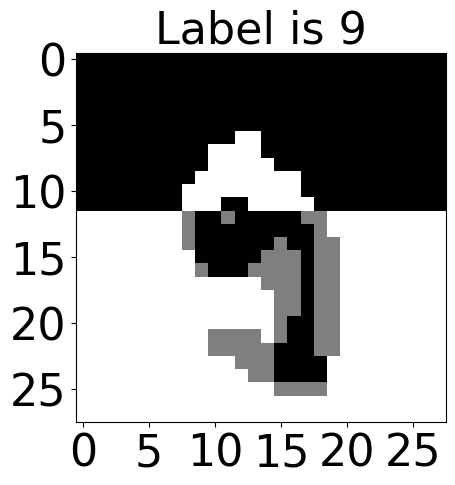}
\\
\includegraphics[scale=0.06]{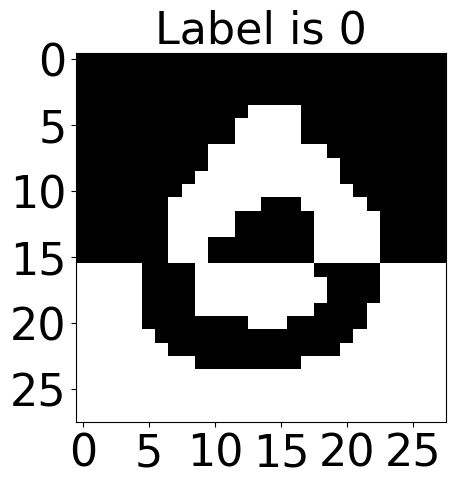}
\includegraphics[scale=0.06]{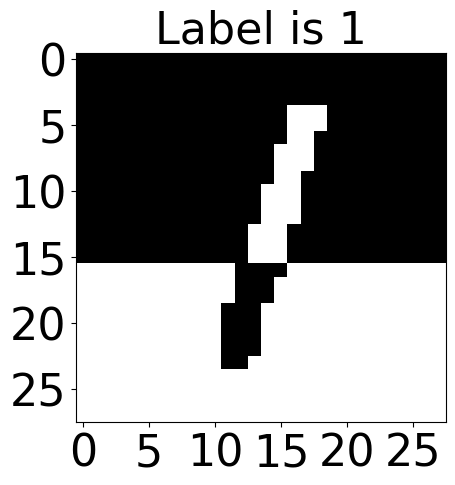}
\includegraphics[scale=0.06]{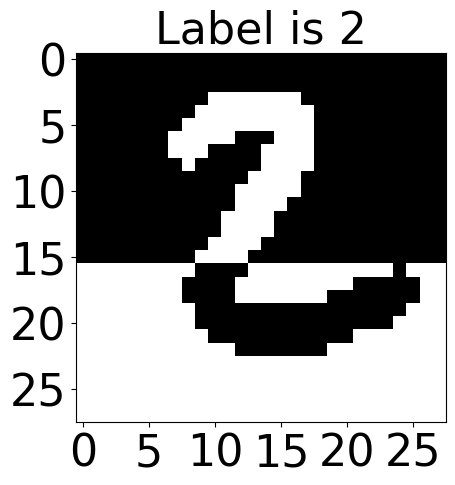}
\includegraphics[scale=0.06]{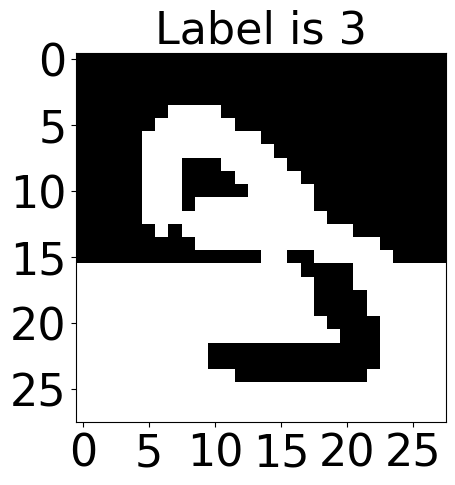}
\includegraphics[scale=0.06]{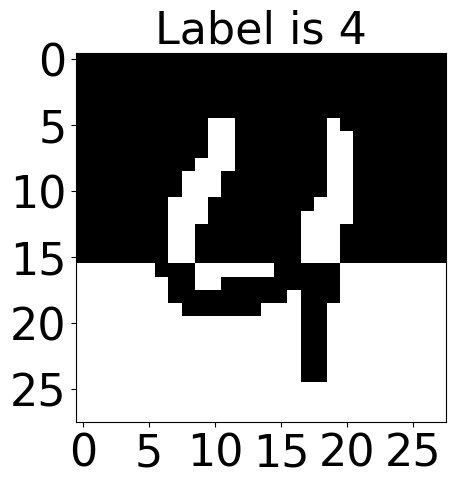}
\includegraphics[scale=0.06]{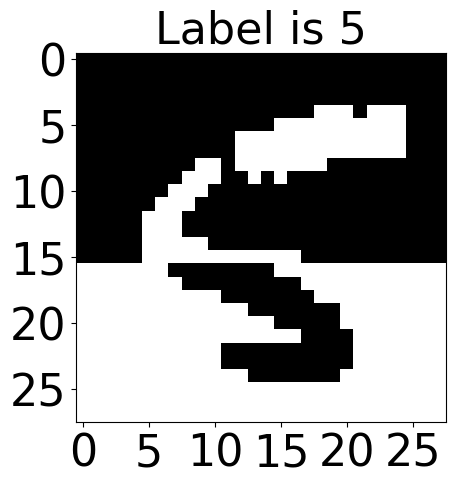}
\includegraphics[scale=0.06]{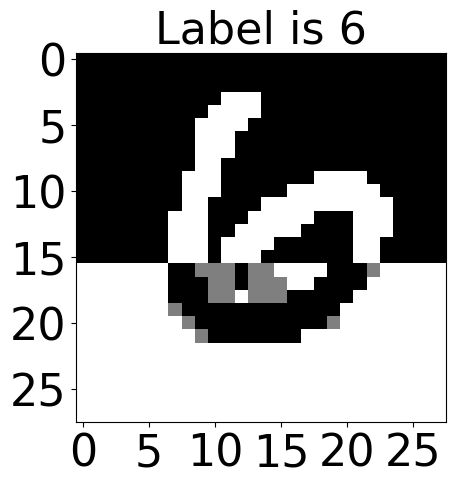}
\includegraphics[scale=0.06]{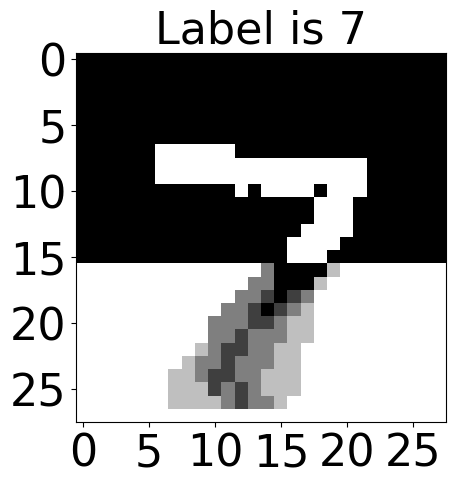}
\includegraphics[scale=0.06]{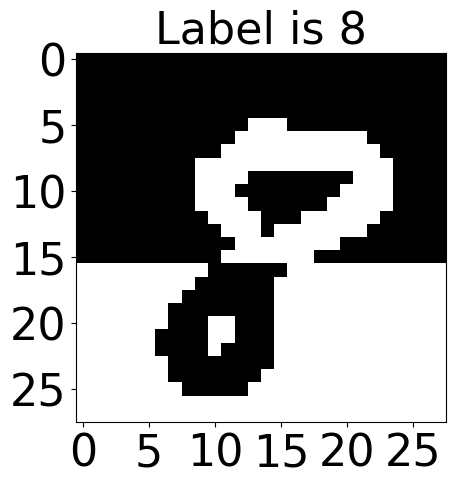}
\includegraphics[scale=0.06]{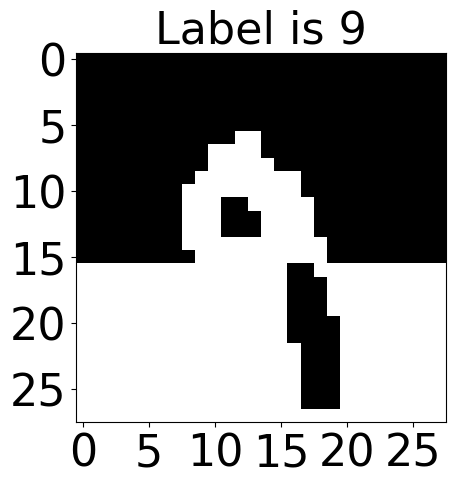}
\\
\includegraphics[scale=0.06]{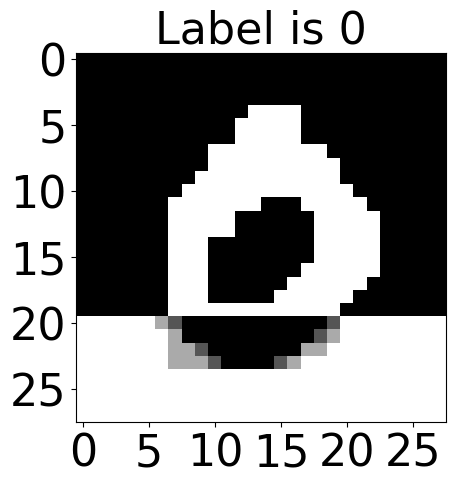}
\includegraphics[scale=0.06]{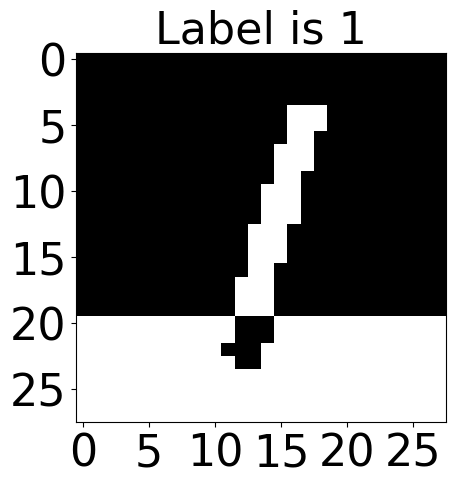}
\includegraphics[scale=0.06]{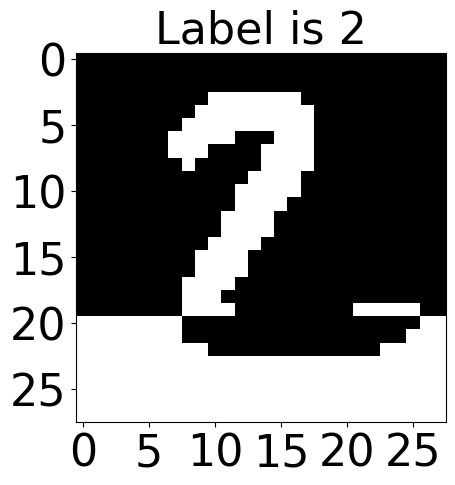}
\includegraphics[scale=0.06]{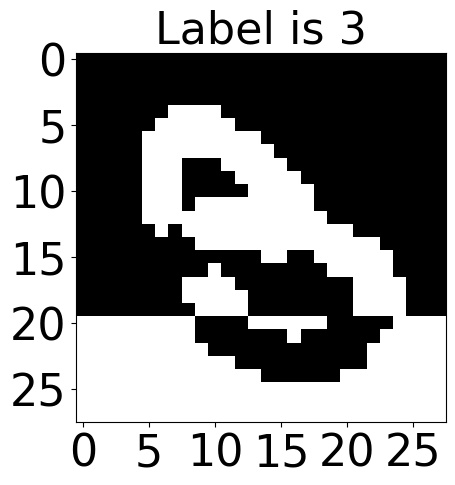}
\includegraphics[scale=0.06]{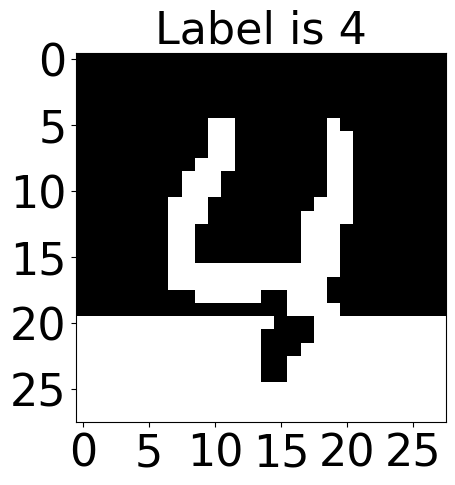}
\includegraphics[scale=0.06]{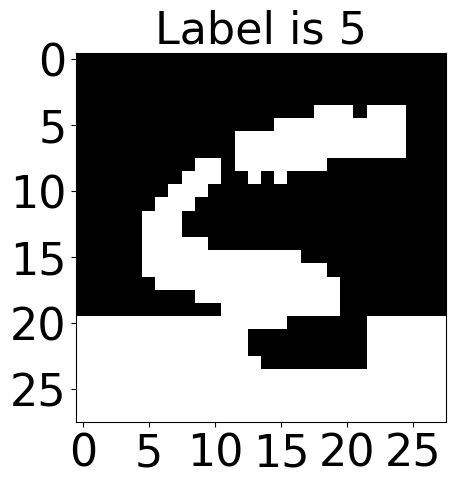}
\includegraphics[scale=0.06]{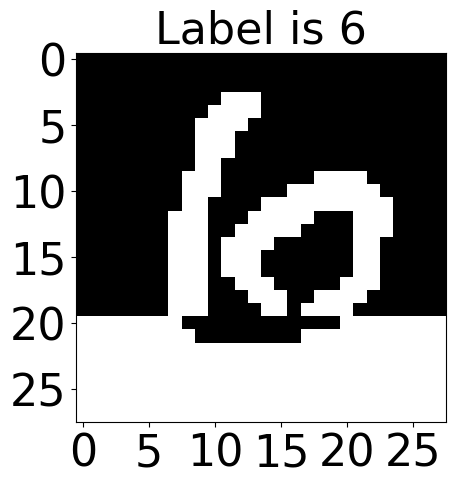}
\includegraphics[scale=0.06]{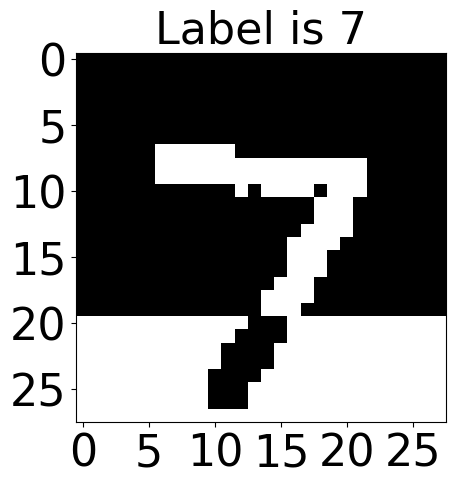}
\includegraphics[scale=0.06]{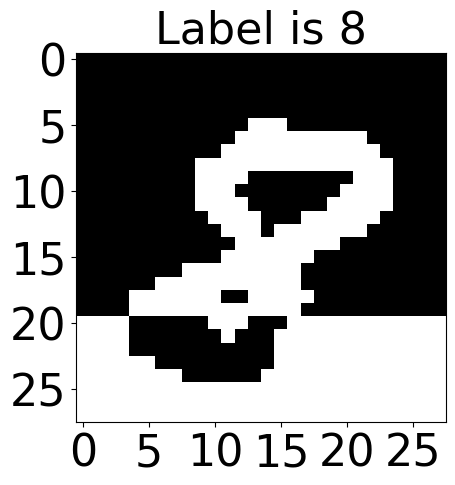}
\includegraphics[scale=0.06]{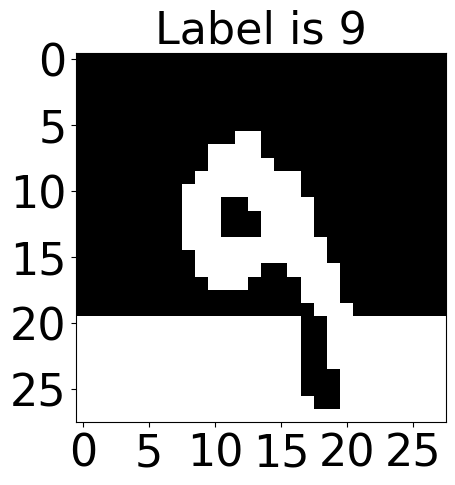}
\\
\includegraphics[scale=0.06]{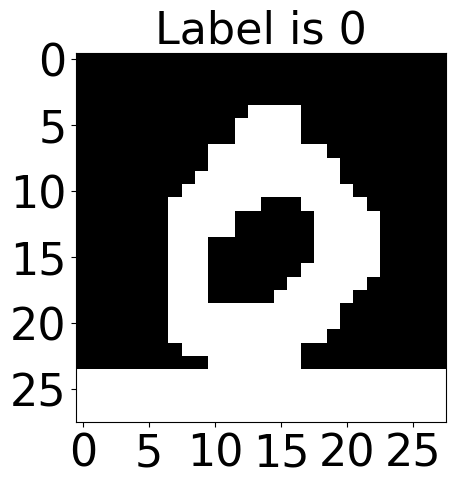}
\includegraphics[scale=0.06]{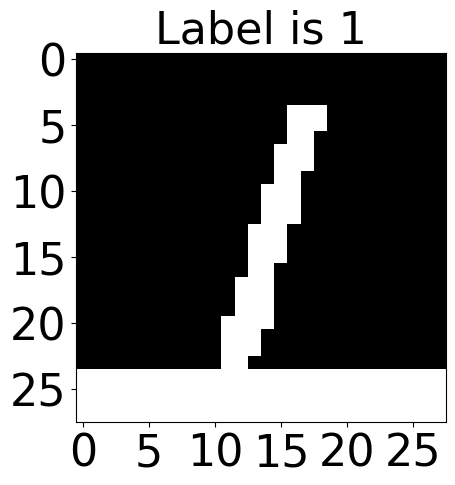}
\includegraphics[scale=0.06]{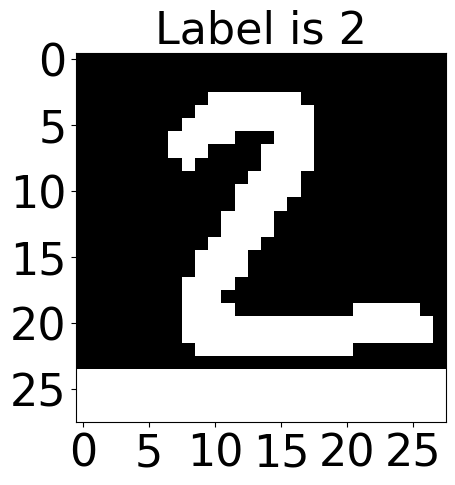}
\includegraphics[scale=0.06]{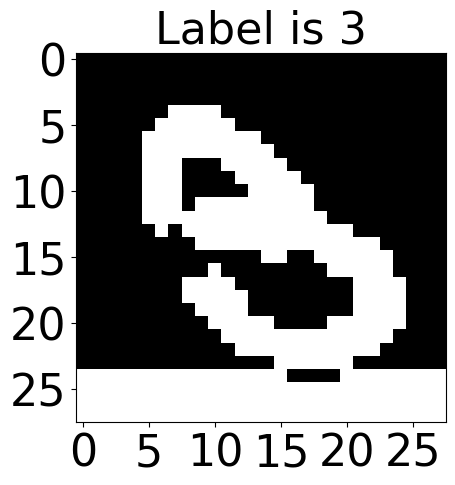}
\includegraphics[scale=0.06]{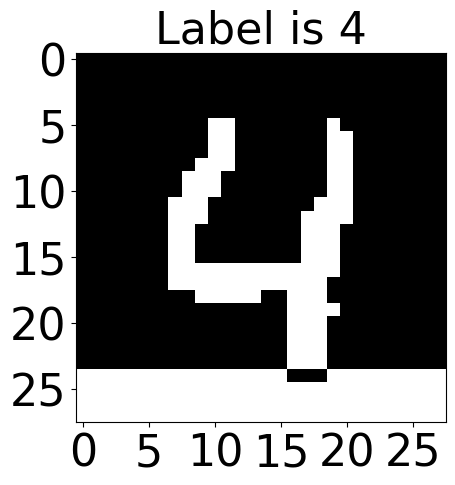}
\includegraphics[scale=0.06]{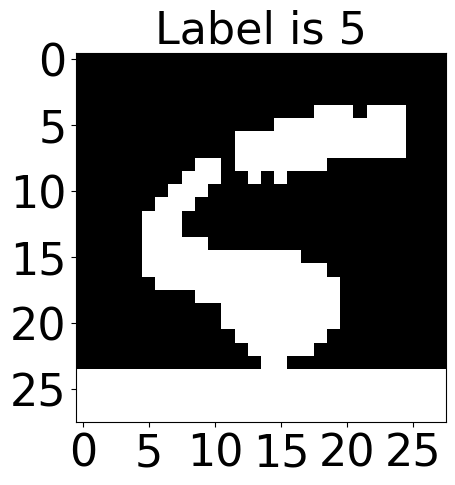}
\includegraphics[scale=0.06]{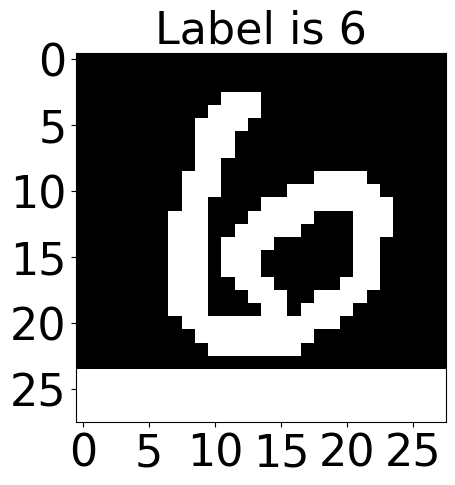}
\includegraphics[scale=0.06]{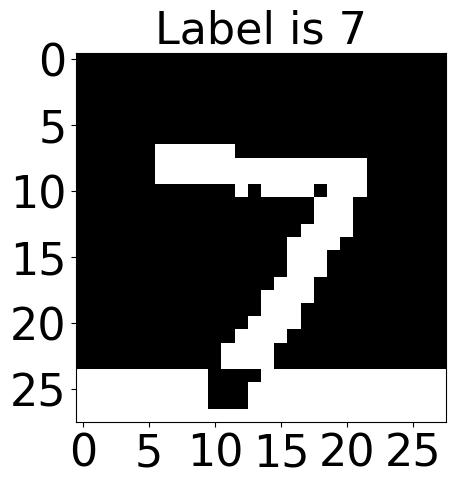}
\includegraphics[scale=0.06]{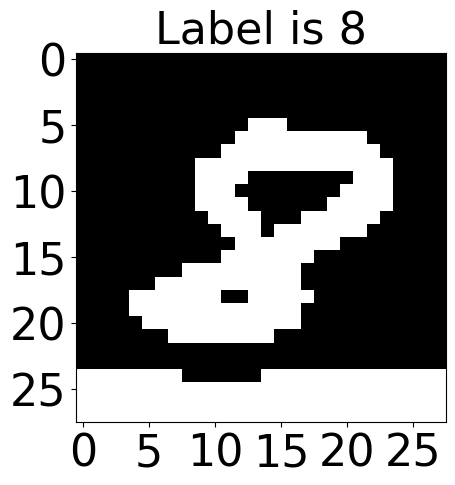}
\includegraphics[scale=0.06]{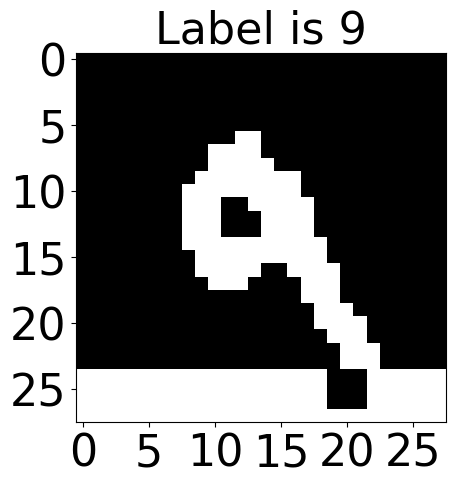}
\end{center}
\caption{The upper black-background area of each image in the $i$th row visualises the first $4i\times 28$ pixel values, approximately worth $14.3i\%$, of a test image. Given the partial information on the test image, the lower masked white-background area is generated using all the 60k training images. We inverted the colours of the generated pixel-values for visibility. Each test image is the first image of the digit in the test dataset. We again normalised the generated pixels.}
\label{fig:generation_from_pixels}
\end{figure}
Figure \ref{fig:generation} shows the images generated from each digit using the following equation, for all $j (1\leq j\leq 28\times 28)$.
{\small
\begin{align*}
p(Pixel_{j}|Digit_{i})=\frac{\sum_{k=1}^{70k}p(Digit_{i}|d_{k})p(Pixel_{j}|d_{k})}{\sum_{k=1}^{70k}p(Digit_{i}|d_{k})}
\end{align*}
}
Each image can be seen as the average of all the images of the same digit. The pixel value of each pixel is the average of the all 70k images. Figure \ref{fig:generation_from_pixels} shows the entire test images generated from their partial pixel values using the following equation, for all $j (I< j\leq 28\times 28)$.
{\small
\begin{align*}
p(Pix_{j}|Pix_{1},...,Pix_{I})=\frac{\sum_{k=1}^{60k}p(Pix_{j}|d_{k})\prod_{i=1}^{I}p(Pix_{i}|d_{k})}{\sum_{k=1}^{60k}\prod_{i=1}^{I}p(Pix_{i}|d_{k})}
\end{align*}
} 
Figure \ref{fig:generation_from_pixels} shows that, given the upper black-background area extracted from a test image, the lower white-background area was generated using all the 60k training images. The top row shows that the images of two and six appear to successfully generate the correct digits even with the 112 pixels having very few clues about the digits. All the other images in the first row appear to be an average training image, since no clue about the digits is included in the 112 pixel values. The fourth row shows that reasonable images are generated from the 448 pixel values, approximately worth 57\%, for all the ten test images.
\section{Discussion}
Inspired by Bayesian approaches to brain function in neuroscience, we asked how reasoning and learning can be given the same probabilistic account. We simply modelled how data cause symbolic knowledge in terms of its satisfiability in formal logic. The underlying idea is that reasoning is a process of deriving symbolic knowledge from data via abstraction, i.e., selective ignorance. We theoretically showed that the theory of inference generalises the classical and empirical consequence relations. We empirically showed that it not only generalises a sort of k-nearest neighbour method but also outperforms a k-nearest neighbour method in AUC on the MNIST dataset.
\bibliography{btx_kido}

\begin{thebibliography}{25}
\providecommand{\natexlab}[1]{#1}
\providecommand{\url}[1]{\texttt{#1}}
\expandafter\ifx\csname urlstyle\endcsname\relax
  \providecommand{\doi}[1]{doi: #1}\else
  \providecommand{\doi}{doi: \begingroup \urlstyle{rm}\Url}\fi

\bibitem[Andreas et~al.(2016)Andreas, Rohrbach, Darrell, and Klein]{Andreas:16}
Jacob Andreas, Marcus Rohrbach, Trevor Darrell, and Dan Klein.
\newblock Neural module networks.
\newblock In \emph{2016 IEEE Conference on Computer Vision and Pattern
  Recognition (CVPR)}, pages 39--48, 2016.

\bibitem[Botha et~al.(2019)Botha, Meyer, and Pe{\~n}aloza]{botha:19}
Leonard Botha, Thomas Meyer, and Rafael Pe{\~n}aloza.
\newblock \emph{A {B}ayesian Extension of the Description Logic $\cal{ALC}$},
  volume 11468, pages 339--354.
\newblock Springer, Cham, {JELIA} 2019 lncs edition, 2019.

\bibitem[Colombo and Seri{\`e}s(2012)]{colombo:12}
Matteo Colombo and Peggy Seri{\`e}s.
\newblock Bayes in the brain: On {B}ayesian modelling in neuroscience.
\newblock \emph{The British Journal for the Philosophy of Science},
  63:\penalty0 697--723, 2012.

\bibitem[Dasgupta et~al.(2020)Dasgupta, Schulz, Tenenbaum, and
  Gershman]{Dasgupta:20}
Ishita Dasgupta, Eric Schulz, Joshua~B Tenenbaum, and Samuel~J Gershman.
\newblock A theory of learning to infer.
\newblock \emph{Psychol Rev.}, 127(3):\penalty0 412--441, 2020.

\bibitem[Friedman et~al.(1996)Friedman, Getoor, Koller, and
  Pfeffer]{friedman:99}
Nir Friedman, Lise Getoor, Dephne Koller, and Avi Pfeffer.
\newblock Learning probabilistic relational models.
\newblock In \emph{Proc. 16th Int. Joint Conf. on Artif. Intell.}, pages
  1297--1304, 1996.

\bibitem[Friston(2012)]{friston:12}
Karl Friston.
\newblock The history of the future of the {B}ayesian brain.
\newblock \emph{Neuroimage}, 62-248(2):\penalty0 1230--1233, 2012.

\bibitem[Funamizu et~al.(2016)Funamizu, Kuhn, and Doya]{funamizu:16}
Akihiro Funamizu, Bernd Kuhn, and Kenji Doya.
\newblock Neural substrate of dynamic {B}ayesian inference in the cerebral
  cortex.
\newblock \emph{Nature Neuroscience}, 19:\penalty0 1682--1689, 2016.

\bibitem[George and Hawkins(2005)]{george:05}
Dileep George and Jeff Hawkins.
\newblock A hierarchical {B}ayesian model of invariant pattern recognition in
  the visual cortex.
\newblock In \emph{Proc. Int. Joint Conf. on Neural Networks}, pages
  1812--1817, 2005.

\bibitem[Hohwy(2014)]{Hohwy:14}
Jakob Hohwy.
\newblock \emph{The Predictive Mind}.
\newblock Oxford University Press, University of Oxford, 2014.

\bibitem[J.Nilsson(1986)]{nilsson:86}
Nils J.Nilsson.
\newblock Probabilistic logic.
\newblock \emph{Artificial Intelligence}, 28:\penalty0 71--87, 1986.

\bibitem[Knill and Pouget(2004)]{knill:04}
David~C. Knill and Alexandre Pouget.
\newblock The {B}ayesian brain: the role of uncertainty in neural coding and
  computation.
\newblock \emph{Trends in Neurosciences}, 27:\penalty0 712--719, 2004.

\bibitem[Lake et~al.(2015)Lake, Salakhutdinov, and Tenenbaum]{Lake:15}
Brenden~M. Lake, Ruslan Salakhutdinov, and Joshua~B. Tenenbaum.
\newblock Human-level concept learning through probabilistic program induction.
\newblock \emph{Science}, 350(6266):\penalty0 1332--1338, 2015.

\bibitem[Lake et~al.(2017)Lake, Ullman, Tenenbaum, and Gershman]{Lake:17}
Brenden~M. Lake, Tomer~D. Ullman, Joshua~B. Tenenbaum, and Samuel~J. Gershman.
\newblock Building machines that learn and think like people.
\newblock \emph{Behavioral and Brain Sciences}, 40(e253):\penalty0 1--72, 2017.

\bibitem[Lee and Mumford(2003)]{lee:03}
Tai~Sing Lee and David Mumford.
\newblock Hierarchical {B}ayesian inference in the visual cortex.
\newblock \emph{Journal of Optical Society of America}, 20:\penalty0
  1434--1448, 2003.

\bibitem[Pearl(1988)]{pearl:88}
Judea Pearl.
\newblock \emph{Probabilistic Reasoning in Intelligent Systems: Networks of
  Plausible Inference}.
\newblock Morgan Kaufmann; 1st edition, Burlington, Massachusetts, 1988.

\bibitem[Pearl(1991)]{pearl:91}
Judea Pearl.
\newblock \emph{Probabilistic Semantics for Nonmonotonic Reasoning}, pages
  157--188.
\newblock Cambridge, MA: The MIT Press, philosophy and {AI}: essays at the
  interface edition, 1991.

\bibitem[Pedregosa et~al.(2011)Pedregosa, Varoquaux, Gramfort, Michel, Thirion,
  Grisel, Blondel, Prettenhofer, Weiss, Dubourg, Vanderplas, Passos,
  Cournapeau, Brucher, Perrot, and Duchesnay]{scikit-learn}
F.~Pedregosa, G.~Varoquaux, A.~Gramfort, V.~Michel, B.~Thirion, O.~Grisel,
  M.~Blondel, P.~Prettenhofer, R.~Weiss, V.~Dubourg, J.~Vanderplas, A.~Passos,
  D.~Cournapeau, M.~Brucher, M.~Perrot, and E.~Duchesnay.
\newblock Scikit-learn: Machine learning in {P}ython.
\newblock \emph{Journal of Machine Learning Research}, 12:\penalty0 2825--2830,
  2011.

\bibitem[Rao and Ballard(1999)]{rao:99}
Rajesh P.~N. Rao and Dana~H. Ballard.
\newblock Predictive coding in the visual cortex: a functional interpretation
  of some extra-classical receptive-field effects.
\newblock \emph{Nature Neuroscience}, 2:\penalty0 79--87, 1999.

\bibitem[Richardson and Domingos(2006)]{richardson:06}
Matthew Richardson and Pedro Domingos.
\newblock Markov logic networks.
\newblock \emph{Machine Learning}, 62:\penalty0 107--136, 2006.

\bibitem[Russell and Norvig(2009)]{russell:09}
Stuart Russell and Peter Norvig.
\newblock \emph{Artificial Intelligence : A Modern Approach, Third Edition}.
\newblock Pearson Education, Inc., London, England, 2009.

\bibitem[Sanborn and Chater(2016)]{sanborn:16}
Adam~N. Sanborn and Nick Chater.
\newblock {B}ayesian brains without probabilities.
\newblock \emph{Trends in Cognitive Sciences}, 20:\penalty0 883--893, 2016.

\bibitem[Sanfilippo et~al.(2018)Sanfilippo, Pfeifer, Over, and
  Gilio]{sanfilippo:18}
Giuseppe Sanfilippo, Niki Pfeifer, David~E. Over, and Angelo Gilio.
\newblock Probabilistic inferences from conjoined to iterated conditionals.
\newblock \emph{Int. Journal of Approximate Reasoning}, 93:\penalty0 103--118,
  2018.

\bibitem[Sato(1995)]{sato:95}
Taisuke Sato.
\newblock A statistical learning method for logic programs with distribution
  semantics.
\newblock In \emph{Proc. 12th int. conf. on logic programming}, pages 715--729,
  1995.

\bibitem[Tenenbaum et~al.(2006)Tenenbaum, Griffiths, and Kemp]{Tenenbaum:06}
Joshua~B. Tenenbaum, Thomas~L. Griffiths, and Charles Kemp.
\newblock Theory-based {B}ayesian models of inductive learning and reasoning.
\newblock \emph{Trends in Cognitive Sciences}, 10(7):\penalty0 309--318, 2006.

\bibitem[Thimm(2013)]{matthias:13}
Matthias Thimm.
\newblock Inconsistency measures for probabilistic logics.
\newblock \emph{Artif. Intell.}, 197:\penalty0 1--24, 2013.

\end{thebibliography}
\end{document}